\def\eqref#1{equation~\ref{#1}}
\def\1{\bm{1}}
\DeclareMathAlphabet{\mathsfit}{\encodingdefault}{\sfdefault}{m}{sl}
\SetMathAlphabet{\mathsfit}{bold}{\encodingdefault}{\sfdefault}{bx}{n}
\newcommand{\ep}{\mathbb{E}}
\newtheorem*{rep@theorem}{\rep@title}
\newcommand{\newreptheorem}[2]{%
\newenvironment{rep#1}[1]{%
 \def\rep@title{#2 \ref{##1}}%
 \begin{rep@theorem}}%
 {\end{rep@theorem}}}
\newtheorem{theorem}{Theorem}
\newtheorem{lemma}{Lemma}
\newtheorem{proposition}{Proposition}
\title{To Relieve Your Headache of Training an MRF, Take AdVIL}
\author{
Chongxuan Li\thanks{Dept. of Comp. Sci. \& Tech., BNRist Center, Institute for AI, THBI Lab, Tsinghua University, Beijing, 100084, China}~~,~Chao Du$^*$,~Kun Xu$^*$,~Max Welling\thanks{University of Amsterdam, and the Canadian Institute for Advanced Research (CIFAR).}~~,~Jun Zhu$^*$,~Bo Zhang$^*$ \\
\texttt{ \{chongxuanli1991,duchao0726,kunxu.thu\}@gmail.com,} \\
\texttt{ M.Welling@uva.nl,~\{dcszj,dcszb\}@mail.tsinghua.edu.cn}
}
\begin{document}

\maketitle

\begin{abstract}
We propose a black-box algorithm called {\it Adversarial Variational Inference and Learning} (AdVIL)  to perform inference and learning in a general Markov random field (MRF). AdVIL employs two variational distributions to approximately infer the latent variables and estimate the partition function of an MRF, respectively. The two variational distributions provide an estimate of the negative log-likelihood of the MRF as a minimax optimization problem, which is solved by stochastic gradient descent. AdVIL is proven convergent under certain conditions. On one hand, compared to the contrastive divergence, AdVIL requires minimal assumptions about the model structure and can deal with a broader family of MRFs. On the other hand, compared to existing black-box methods, AdVIL provides a tighter estimate of the log partition function and achieves much better empirical results. 
\end{abstract}

\section{Introduction}

Markov random fields (MRFs) find applications in a variety of machine learning areas~\citep{krahenbuhl2011efficient,salakhutdinov2010efficient,lafferty2001conditional}.
In particular, one famous example is conditional random fields~\citep{lafferty2001conditional}, a conditional version of MRFs that was developed to address the limitations (e.g., local dependency and label bias) of directed models for sequential data (e.g., hidden Markov models and other discriminative Markov models based on directed graphical models).
However, the inference and learning of general MRFs are challenging due to the presence of a global normalizing factor, i.e. partition function, especially when latent variables are present. 
Extensive efforts have been devoted to developing approximate methods. On one hand, sample-based methods~\citep{neal1993probabilistic} and variational approaches~\citep{jordan1999introduction,welling2005learning,salakhutdinov2010efficient} are proposed to infer the latent variables. On the other hand, extensive work~\citep{meng1996simulating,neal2001annealed,hinton2002training,tieleman2008training,wainwright2005new,wainwright2006log} has been done 
to estimate the partition function. Among these methods, contrastive divergence~\citep{hinton2002training} is proven effective in certain types of models.

Most of the existing methods highly depend on the model structure and require model-specific analysis in new applications, which makes it important to develop  black-box inference and learning methods. Previous work~\citep{ranganath2014black,schulman2015gradient} shows the ability to automatically infer the latent variables and obtain gradient estimate in directed models. However, there is no black-box learning method for undirected models except the recent work of NVIL~\citep{kuleshov2017neural}.

NVIL introduces a variational distribution and derives an upper bound of the partition function in a general MRF, in the same spirit as amortized inference~\citep{kingma2013auto,rezende2014stochastic,mnih2014neural} for directed models. NVIL has several advantages over existing methods, including the ability of black-box learning, tracking the partition function during training and getting approximate samples efficiently during testing. However, NVIL also comes with two disadvantages: (1) it leaves the inference problem of MRFs unsolved\footnote{NVIL~\citep{kuleshov2017neural} presents a hybrid model. The “inference” in the title refers to directed part but not for an MRF.} and only trains simple MRFs with tractable posteriors, and (2) the upper bound of the partition function can be underestimated~\citep{kuleshov2017neural}, resulting in sub-optimal solutions on high-dimensional data.

We propose {\it Adversarial Variational Inference and Learning} (AdVIL) to relieve some headache of learning an MRF model. AdVIL is a  black-box inference and learning method that partly solves the two problems of NVIL and retains the advantages of NVIL at the same time. First, AdVIL introduces a variational encoder to infer the latent variables, which provides an upper bound of the free energy. Second, AdVIL introduces a variational decoder for the MRF, which provides a lower bound of the log partition function. The two variational distributions provide an estimate of the negative log-likelihood of the MRF.
On one hand, the estimate is in an intuitive form of an approximate {\it contrastive free energy}, which is expressed in terms of the expected energy and the (conditional) entropy of the corresponding variational distribution.
On the other hand, similar to GAN~\citep{goodfellow2014generative}, the estimate is a minimax optimization problem, which is solved by stochastic gradient descent (SGD) in an alternating manner. Theoretically, our algorithm is convergent if the variational decoder approximates the model well. This motivates us to introduce an auxiliary variable to enhance the flexibility of the variational decoder, whose entropy is approximated by the third variational trick.

We evaluate AdVIL in various undirected generative models, including restricted Boltzmann machines (RBM)~\citep{ackley1985learning},  deep Boltzmann machines (DBM)~\citep{salakhutdinov2009deep}, and  Gaussian restricted Boltzmann machines (GRBM)~\citep{hinton2006reducing}, on several real datasets. We empirically demonstrate that (1) compared to the black-box NVIL~\citep{kuleshov2017neural} method, AdVIL provides a tighter estimate of the log partition function and achieves much better log-likelihood results; and (2) compared to contrastive divergence based methods~\citep{hinton2002training,welling2005learning}, AdVIL can deal with a broader family of MRFs without model-specific analysis and obtain better results when the model structure gets complex as in DBM.

\vspace{-.2cm}
\section{Background}
\vspace{-.1cm}

We consider a general case where the model consists of both visible variables $v$ and latent variables $h$. An MRF defines the joint distribution over $v$ and $h$ as
$ P(v, h) = \frac{e^{-  \mathcal{E}(v, h)}}{\mathcal{Z}},$
where $\mathcal{E}$ denotes the associated energy function that assigns a scalar value for a given configuration of $(v, h)$ and $\mathcal{Z}$ is the partition function such that $\mathcal{Z} = \int_{v, h} e^{-\mathcal{E}(v, h)}dv dh$.

Let $P_\mathcal{D}(v)$ denote the empirical distribution of the training data. Minimizing the negative log-likelihood (NLL) of an MRF is a commonly chosen learning criterion and it is given by:
\begin{align}
\mathcal{L}(\theta) \coloneqq  & -
\mathbb{E}_{P_\mathcal{D}(v)}\left[\log \int_h \frac{e^{-  \mathcal{E}(v, h)}}{\mathcal{Z}} dh\right], \label{eqn:nll}
\end{align}
where $\theta$ denotes the trainable parameters in $\mathcal{E}$ . Further, the gradient of $\theta$ is:
\begin{equation}
    \nabla_{\theta} \mathcal{L}(\theta)  = 
\mathbb{E}_{P_\mathcal{D}(v)}\left[ \nabla_{\theta}  \mathcal{F}(v) \right] - \mathbb{E}_{P(v)}  \left[ \nabla_{\theta}  \mathcal{F}(v)\right], \label{eqn:grad_nll} 
\end{equation}
where $\mathcal{F}(v) = - \log \int_h e^{-\mathcal{E}(v, h)} dh$ denotes the free energy and the gradient in Eqn.~(\ref{eqn:grad_nll}) is the difference of the free energy in two phases.
In the first {\it positive phase}, the expectation of the free energy under the data distribution is decreased. In the second {\it negative phase}, the expectation of the  free energy under the model distribution is increased.

Unfortunately, both the NLL in Eqn.~(\ref{eqn:nll}) and its gradient in Eqn.~(\ref{eqn:grad_nll}) are intractable in general for two reasons. First, the integral of the latent variables in 
Eqn.~(\ref{eqn:nll}) or equivalently the computation of the free energy in Eqn.~(\ref{eqn:grad_nll}) is intractable. Second, the computation of the partition function in Eqn.~(\ref{eqn:nll}) or equivalently the negative phase in Eqn.~(\ref{eqn:grad_nll}) is intractable.

\textbf{Variational inference.} Extensive work introduces deterministic approximations for the intractability of inference, including the mean-field approximation~\citep{welling2002new,salakhutdinov2009deep}, the Kikuchi and Bethe approximations~\citep{welling2005learning} and the recognition model approach~\citep{salakhutdinov2010efficient}. 
In this line of work, the intractability of the partition function is addressed using Monte Carlo based methods.

\textbf{Contrastive free energy.} Contrastive divergence (CD)~\citep{hinton2002training} addresses the intractability of the partition function by approximating the negative phase in Eqn.~(\ref{eqn:grad_nll}) as follows:
\begin{equation}
    \nabla_{\theta} \mathcal{L}(\theta)  = 
\mathbb{E}_{P_\mathcal{D}(v)}\left[ \nabla_{\theta}  \mathcal{F}(v) \right] - \mathbb{E}_{P_{CD}(v)} \left[ \nabla_{\theta}  \mathcal{F}(v)\right], \label{eqn:grad_cd}
\end{equation}
where $P_{CD}(v)$ denotes the empirical distribution obtained by starting from a data point and running several steps of Gibbs sampling according to the model distribution and the free energy $\mathcal{F}(v)$ is assumed to be tractable. Existing methods~\citep{welling2002new,welling2005learning} approximate $\mathcal{F}(v)$  using certain function $\mathcal{G}(v)$ and the gradient of $\theta$ is:
\begin{equation}
    \nabla_{\theta} \mathcal{L}(\theta)  \approx 
\mathbb{E}_{P_\mathcal{D}(v)}\left[ \nabla_{\theta}  \mathcal{G}(v) \right] - \mathbb{E}_{P_{CD}(v)} \left[ \nabla_{\theta}  \mathcal{G}(v)\right]. \label{eqn:grad_vcd}
\end{equation}
Although these generalized methods exist, it is nontrivial to extend CD-based methods to general MRFs because the Gibbs sampling procedure is highly dependent on the model structure.

\textbf{Black-box learning.}
The recent work of NVIL~\citep{kuleshov2017neural} addresses the intractability of the partition function in a black-box manner via a variational upper bound of the partition function:
\begin{equation}
    \mathbb{E}_{q(v)} \left[ \frac{ \tilde{P} (v)^2}{q(v)^2}\right ] \ge \mathcal{Z}^2,~\label{eqn:nvil_bound}
\end{equation}
where $\tilde{P} (v) = e^{-\mathcal{F}(v)}$ is the unnormalized marginal distribution on $v$ and $q(v)$ is a neural variational distribution. As a black-box learning method, NVIL potentially allows application to broader model families and improves the capabilities of probabilistic programming systems~\citep{carpenter2017stan}.
Though promising, NVIL leaves the intractability of inference in an MRF unsolved, and the bound in Eqn.~(\ref{eqn:nvil_bound}) is of high variance and is easily underestimated~\citep{kuleshov2017neural}. 

\begin{figure}[t]
\vspace{-.2cm}
    \centering
    \includegraphics[width=.9\columnwidth]{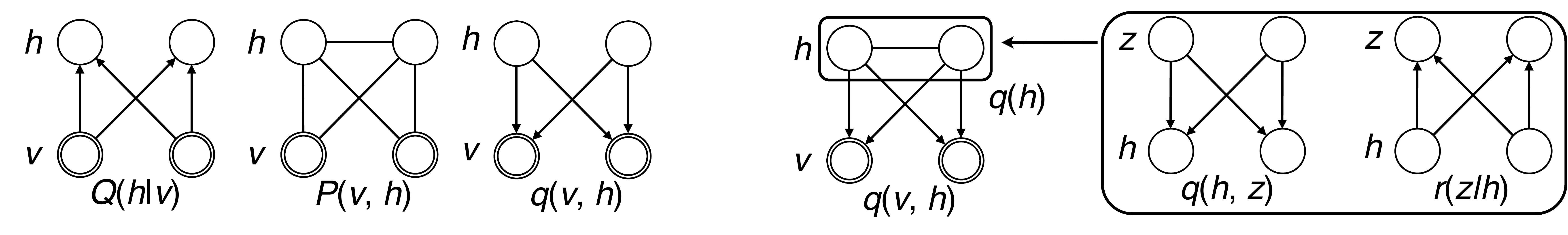}
\vspace{-.1cm}
    \caption{Illustration of the models involved in AdVIL. From left to right: variational encoder $Q(h|v)$, MRF $P(v, h)$, variational decoder $q(v, h)$ with a simple prior and $q(v, h)$ with an expressive prior. }
    \label{fig:AdVIL}
\vspace{-.2cm}
\end{figure}

\vspace{-.1cm}
\section{Method}
\vspace{-.1cm}

As stated above, the black-box inference and learning of MRFs are still largely open. In this paper, we make a step towards solving the problems by a new variational approach. For simplicity,  we focus on the resulting objective function in this section. See Appendix~\ref{app:obj} for detailed derivation.

\subsection{Adversarial variational inference and learning}

First, we rewrite the NLL of the MRF (See an illustration in Fig.~\ref{fig:AdVIL}) as follows:
\begin{equation}
\mathcal{L}(\theta) = - \mathbb{E}_{P_\mathcal{D}(v)}\left[- \mathcal{F}(v)\right] + \log \mathcal{Z}, \label{eqn:nll_2}
\end{equation}
where the negative free energy and the log partition function are in the form of a logarithm of an integral. Naturally, we can apply the variational trick~\citep{jordan1999introduction} twice and approximate the two terms individually. Due to the presence of the minus before the first term in Eqn.~(\ref{eqn:nll_2}), the two variational tricks bound the two parts of the NLL in the opposite directions, detailed as below.

Formally, on one hand, we introduce an approximate posterior for the latent variables $Q(h|v)$, which is parameterized as a {\it neural variational encoder} (See an illustration in Fig.~\ref{fig:AdVIL}), to address the intractability of inference as follows:
\begin{align}
\mathcal{L}(\theta) 
\le \mathbb{E}_
{P_\mathcal{D}(v)Q(h|v)} \left[\mathcal{E} (v, h) + \log Q(h | v) \right] + \log \mathcal{Z} \coloneqq \mathcal{L}_1(\theta, \phi), \label{eqn:AdVIL_step1}
\end{align}
where $\phi$ denotes the trainable parameters in $Q(h|v)$. 
The upper bound is derived via applying the Jensen inequality and the equality holds if and only if $Q(h|v) = P(h|v)$ for all $v$.
In the bound, the first term is the expected energy, which encourages $Q(h|v)$ to infer latent variables that have low values of the energy function $\mathcal{E}(v, h)$, or equivalently high probabilities of $P(v, h)$. The second term corresponds to the negative conditional entropy of $Q(h|v)$, which increases the uncertainty of $Q(h|v)$. In the paper, we denote the conditional entropy of $Q(h|v)$ as $\mathcal{H}(Q) \coloneqq - \mathbb{E}_
{P_\mathcal{D}(v)Q(h|v)} [\log Q(h | v)]$. 

On the other hand, we introduce an approximate sampler $q(v, h)$, which is parameterized by a {\it neural variational decoder} (See  Fig.~\ref{fig:AdVIL}), to address the  intractability of the partition function as follows:
\begin{align} \label{eqn:AdVIL_obj}  
\!\!  \mathcal{L}_1(\theta, \phi) 
\!\! \ge \!\! \mathbb{E}_{P_\mathcal{D}(v)Q(h|v)}\!\!\! \left[\underbrace{\overbrace{\mathcal{E}(v, h)}^{\text{energy term}} \!+\!  \overbrace{\log Q(h|v)}^{\text{entropy term}}}_{\textbf{Positive Phase}}\right]
\!\!\! - \!\!\mathbb{E}_{q(v, h)}\!\!\! \left[\underbrace{\overbrace{\mathcal{E}(v, h)}^{\text{energy term}} \! + \! \overbrace{\log q(v, h)}^{\text{entropy term}}}_{\textbf{Negative Phase}}\right]\!\!\coloneqq  \mathcal{L}_2(\theta,\phi,\psi),
\end{align}
where $\psi$ denotes the trainable parameters in $q(v, h)$.
The lower bound is derived via applying the Jensen inequality as well, and the equality holds if and only if $q(v, h) = P(v, h)$. It can be seen that the lower bound given by $q(v, h)$ consists of the entropy (denoted as $\mathcal{H}(q)$) and energy terms, which is similar to the upper bound in Eqn.~(\ref{eqn:AdVIL_step1}), and the overall objective is in the form of approximate {\it contrastive free energy}~\citep{hinton2002training, welling2005learning}.
Because the double variational trick bounds the NLL in opposite directions as above, we have a minimax optimization problem:
\begin{equation}
\min_{\theta} \min_{\phi} \max_{\psi} \mathcal{L}_2(\theta,\phi,\psi).~\label{eqn:lp}
\end{equation}
The minimax formulation has been investigated in GAN~\citep{goodfellow2014generative} and it is interpreted as an adversarial game between two networks. We name our framework {\it adversarial variational inference and learning} (AdVIL) following the well-established literature.

Note that $\mathcal{L}_2(\theta, \phi, \psi)$ is neither an upper bound, nor a lower bound of $\mathcal{L}(\theta)$ due to the double variational trick. However, we argue that solving the optimization problem in Eqn.~(\ref{eqn:lp}) is reasonable because (1) it is equivalent to optimizing $\mathcal{L}(\theta)$ under the nonparametric assumption, which is similar to GAN~\citep{goodfellow2014generative}; and (2) it converges to a stationary point of $\mathcal{L}_1(\theta, \phi)$, which is an upper bound of $\mathcal{L}(\theta)$, under a weaker assumption, as stated in the following theoretical analysis.


\subsection{Theoretical analysis of AdVIL} \label{sec:theo_analy}

In this section, we present our main theoretical results and the proofs can be found in Appendix~\ref{app:theory}.
Firstly, similarly to GAN~\citep{goodfellow2014generative}, we can prove that $\mathcal{L}_2$ is a tight estimate of $\mathcal{L}$ under the nonparametric assumption, which is summarized in Proposition~\ref{thm:nop} in Appendix~\ref{app:theorey_nop}. However, the nonparametric assumption does not tolerate any approximation error between $P(v, h)$ and $q(v, h)$ during training and no guarantee can be obtained in finite steps. To this end, we establish a convergence theorem based on a weaker assumption that allows non-zero approximation error before convergence. A key insight is that the angle between $\frac{\partial \mathcal{L}_2(\theta, \phi, \psi)}{\partial \theta}$ and $\frac{\partial \mathcal{L}_1(\theta, \phi)}{\partial \theta}$ is positive if $q(v, h)$ approximates $P(v, h)$ well, as stated in the following Lemma~\ref{thm:pos_grad}.
\begin{replemma}{thm:pos_grad}
For any $(\theta, \phi)$, there exists a symmetric positive definite matrix $H$ such that $\frac{\partial \mathcal{L}_2(\theta, \phi, \psi)}{\partial \theta}  = H \frac{\partial \mathcal{L}_1(\theta, \phi)}{\partial \theta}$  under the  assumption: $||\sum_{v, h}\delta(v, h) \frac{\partial \mathcal{E}(v, h) }
{\partial \theta}||_2  < || \frac{\partial \mathcal{L}_1(\theta, \phi)}{\partial \theta} ||_2$ if  $ || \frac{\partial \mathcal{L}_1(\theta, \phi)}{\partial \theta} ||_2 > 0 $ and $||\sum_{v, h}\delta(v, h) \frac{\partial \mathcal{E}(v, h) }
{\partial \theta}||_2 = 0$ if 
$|| \frac{\partial \mathcal{L}_1(\theta, \phi)}{\partial \theta} ||_2 = 0$,
where $\delta(v, h) = q(v, h) - P(v, h)$.
\end{replemma}

Based on Lemma~\ref{thm:pos_grad} and other commonly used assumptions in the analysis of stochastic optimization~\citep{bottou2018optimization}, AdVIL converges to a stationary point of $\mathcal{L}_1(\theta, \phi)$, as stated in Theorem~\ref{thm:convergence}.
\begin{reptheorem}{thm:convergence}
Solving the optimization problem in Eqn.~(\ref{eqn:lp}) using stochastic gradient descent, then $(\theta, \phi)$ converges to a stationary point of $\mathcal{L}_1(\theta, \phi)$
under the assumptions in the general stochastic optimization~\citep{bottou2018optimization} and that the condition of Lemma~\ref{thm:pos_grad} holds in each step.
\end{reptheorem}

Please see Appendix~\ref{app:theorey_convergence} for a detailed and formal version of Theorem~\ref{thm:convergence}. Compared to Proposition~\ref{thm:nop} and the analysis in GAN~\citep{goodfellow2014generative}, Theorem~\ref{thm:convergence} has a weaker statement that AdVIL converges to a stationary point of the negative evidence lower bound (i.e., $\mathcal{L}_1$)  instead of  $\mathcal{L}$. Nevertheless, we argue that converging to $\mathcal{L}_1$ is sufficiently good for
variational approaches in general. Besides,  Theorem~\ref{thm:convergence} states that AdVIL can at least decrease $\mathcal{L}_1$ in expectation if the assumption holds in finite steps. Indeed, we empirically justify Theorem~\ref{thm:convergence}, as detailed in Appendix~\ref{app:test_lemma}. Theorem~\ref{thm:convergence} also provides insights for the implementation of AdVIL. Indeed, its assumption motivates us to use a sufficiently powerful $q(v, h)$ with neural networks and auxiliary variables, and update $q(v, h)$ multiple times per update of $P(v, h)$, as detailed in Sec.~\ref{sec:spe_v} and  Sec.~\ref{sec:analysis} respectively.

\subsection{Specifying the variational distributions}
\label{sec:spe_v}

To efficiently get samples, both variational distributions are directed models.
We use a directed neural network that maps $v$ to $h$ as the variational encoder $Q(h|v)$~\citep{kingma2013auto}.

As for the variational decoder, we first factorize it as the product of a prior over $h$ and a conditional distribution, namely $q(v, h) = q(v|h) q(h)$. It is nontrivial to specify the prior $q(h)$ because the marginal distribution of $h$ in the MRF, i.e. $P(h)$, can be correlated across different dimensions. Consequently, a simple $q(h)$ is not flexible enough to track $P(h)$ and can violate the condition of Lemma~\ref{thm:pos_grad}. To this end, we introduce an auxiliary variable $z$, which can be discrete or continuous, on top of $h$ and define $q(v, h) = \int_z q(z) q(h | z) q(v | h) dz$.\footnote{An alternative way is to use an autoregressive model as $q(h)$. See results and analysis in Appendix~\ref{app:nade}.} (See an illustration in Fig.~\ref{fig:AdVIL}.) However, the entropy term of $q(v, h)$ is intractable because we need to integrate out the auxiliary variable $z$. Therefore, we introduce the third variational distribution $r(z|h)$ to approximate the entropy of $q(v, h)$.
As in Eqn.~(\ref{eqn:AdVIL_step1}),  applying the standard variational trick gives an upper bound:
\begin{align}
 - \mathbb{E}_{q(v, h)} \log q(v, h) 
\le - \mathbb{E}_{ q(v, h)}  \log q(v| h) - \mathbb{E}_{  q(h) r(z|h)}  \log \left [\frac{q(h, z)}{r(z|h)} \right ],\label{eqn:entropy_lower_bound} 
\end{align}
which is unsatisfactory because the estimate is minimized w.r.t $r(z|h)$ while maximized w.r.t $q(v, h)$. Instead, after some transformations (See details in Appendix~\ref{app:obj}) we get a lower bound as follows:
\begin{align}
 - \mathbb{E}_{q(v, h)} \log q(v, h) 
\ge - \mathbb{E}_{ q(v, h)}  \log q(v| h) - \mathbb{E}_{  q(h, z)}  \log \left [\frac{q(h, z)}{r(z|h)} \right ].\label{eqn:entropy_bound}
\end{align}
The equality holds if and only if $r(z|h) = q(z|h)$ for all $h$. 
The difference between the two bounds is subtle: the last expectation in 
Eqn.~(\ref{eqn:entropy_lower_bound})
is over $q(h) r(z|h)$ but that in 
Eqn.~(\ref{eqn:entropy_bound}) is over $q(h, z)$.
Here, a lower bound is preferable because the estimate is maximized with respect to both $r(z|h)$ and $q(v, h)$ and we can train them simultaneously. For simplicity, we absorb the trainable parameters of $r(z|h)$ into $\psi$.
Note that after introducing $z$ and $r(z|h)$, we can still obtain a convergence theorem of AdVIL under the conditions that $r(z|h)$ approximates $q(z|h)$ well and $q(v, h) = \int q(v, h, z) dz$ is sufficiently close to $P(v, h)$ in every step, together the assumptions in general stochastic optimization.

Following GAN~\citep{goodfellow2014generative}, we optimize $\theta$, $\phi$ and $\psi$ jointly using stochastic gradient descent (SGD) in an alternating manner. The partial derivatives of $\phi$ and $\psi$ are estimated via the reparameterization trick~\citep{kingma2013auto} for the continuous variables and the Gumbel-Softmax trick~\citep{jang2016categorical,maddison2016concrete} for the discrete variables. See Algorithm~\ref{algo:AdVIL} in Appendix~\ref{app:algo} for the whole training procedure. Note that $\psi$ is updated $K_1 > 1$ times per update of $\theta$.

\vspace{-.2cm}
\section{Related work}
\label{sec:related_work}
\vspace{-.2cm}

Existing traditional methods~\citep{neal2001annealed,hinton2002training,winn2005variational,wainwright2006log,rother2007optimizing} can be used to estimate the log partition function but are nontrivial to be extended to learn general MRFs. Some methods~\citep{winn2005variational,neal2001annealed} require an expensive inference procedure for each update of the model and others~\citep{hinton2002training,rother2007optimizing} cannot be directly applied to general cases (e.g., DBM). 
Among these methods, contrastive divergence (CD)~\citep{hinton2002training} is proven effective in certain types of models and it is closely related to AdVIL. Indeed, the partial derivative of $\theta$ in AdVIL is:
\begin{equation}
    \frac{\partial  \mathcal{L}_2(\theta,\phi,\psi)}{ \partial \theta}  = \mathbb{E}_{P_\mathcal{D}(v)Q(h|v)}\left[ \frac{\partial}{ \partial \theta}  \mathcal{E}(v, h)\right]  - \mathbb{E}_{q(v, h)}\left[  \frac{\partial}{ \partial \theta}  \mathcal{E}(v, h)\right], \label{eqn:AdVIL_grad_e}
\end{equation}
which also involves a positive phase and a negative phase naturally and is quite similar to Eqn.~(\ref{eqn:grad_cd}).
However, notably, the two phases average over the $(v, h)$ pairs and only require the knowledge of the energy function without any further assumption of the model in AdVIL. Therefore, AdVIL is more suitable to general MRFs than CD (See empirical evidence in Sec.~\ref{sec:exp_dbm}).

In the context of black-box learning in MRFs, AdVIL competes directly with NVIL~\citep{kuleshov2017neural}.
It seems that the upper bound in Eqn.~(\ref{eqn:nvil_bound}) is suitable for optimization because $P$ and $q$ share the same training direction. However, the bound holds only if the support of $\tilde P$ is a subset of the support of $q$. 
Further, the Monte Carlo estimate of the upper bound is of high variance. Therefore,
the bound of NVIL can be easily underestimated, which results in sub-optimal solutions~\citep{kuleshov2017neural}.
In contrast, though AdVIL arrives at a minimax optimization problem, the estimate of Eqn.~(\ref{eqn:AdVIL_obj}) is tighter and of lower variance. 
We empirically verify this argument (See Fig.~\ref{fig:nvil_under_estimate}) and systematically compare the two methods (See Tab.~\ref{table:rbm_uci}) in Sec.5.4.

Apart from the work on approximate inference and learning in MRFs as mentioned above, AdVIL is also related to some directed  models.
\citet{kim2016deep} jointly trains a deep energy model~\citep{ngiam2011learning} and a directed  generative model by minimizing the KL-divergence between them. Similar ideas have been highlighted in~\citep{finn2016connection,zhai2016generative,dai2017calibrating,liu2017learning}.  In comparison, firstly, AdVIL obtains the objective function in a unified perspective on the black-box inference and learning in general MRFs. Note that dealing with latent variables in MRFs is nontrivial~\citep{kim2016deep} and therefore existing work focuses on fully observable models. Secondly, AdVIL uses a sophisticated decoder with auxiliary variables to handle the latent variables and derives a principled variational approximation of the entropy term instead of the heuristics~\citep{kim2016deep,zhai2016generative}. 
Lastly, the convergence of AdVIL is formally characterized by Theorem~\ref{thm:convergence} while the effect of the approximation error in inference is not well understood in existing methods. 
Adversarially learned inference (ALI)~\citep{donahue2016adversarial,dumoulin2016adversarially} is also formulated as a minimax optimization problem but focuses on directed models.

\vspace{-.2cm}
\section{Experiments}
\vspace{-.2cm}

In this section, we evaluate AdVIL in restricted Boltzmann machines (RBM)~\citep{ackley1985learning}, deep Boltzmann machines (DBM)~\citep{salakhutdinov2009deep} and  Gaussian restricted Boltzmann machines (GRBM)~\citep{hinton2006reducing} 
on the Digits dataset, the UCI binary databases~\citep{Dua:2017} and the Frey faces datasets (See detailed settings in Appendix~\ref{app:data} and the source code\footnote{See the source code in https://anonymous.4open.science/r/8c779fbc-6394-40c7-8273-e52504814703/.}).
We compare AdVIL with strong baseline methods systematically and show the promise of AdVIL to learn a broad family of models effectively as a
black-box method.

\begin{figure}[t]
\begin{center}
\subfigure[upper bound of $\mathcal{F}(v)$]{\includegraphics[width=0.24\columnwidth]{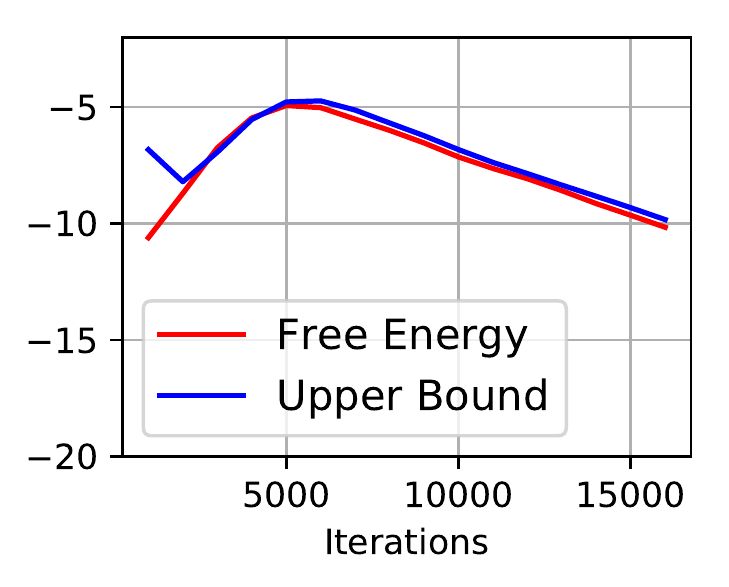}}
\subfigure[lower bound of $\mathcal{H}(q)$ ]{\includegraphics[width=0.24\columnwidth]{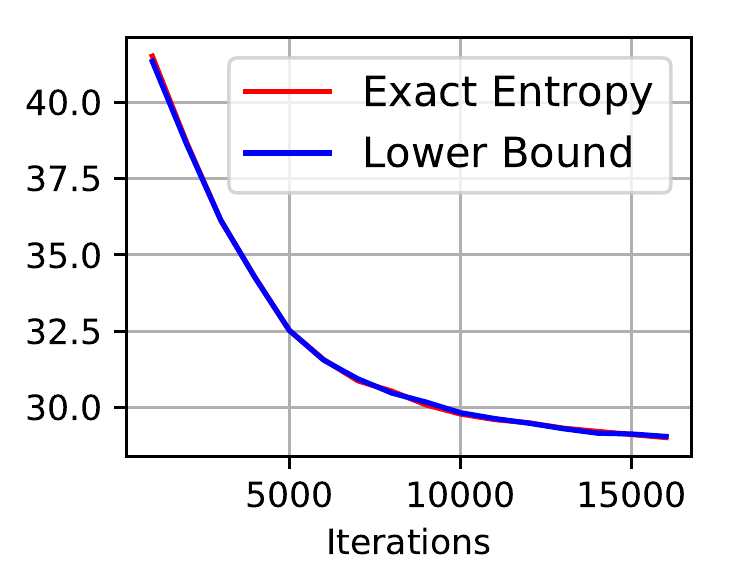}}
\subfigure[lower bound of $\log \mathcal{Z}$ ]{\includegraphics[width=0.24\columnwidth]{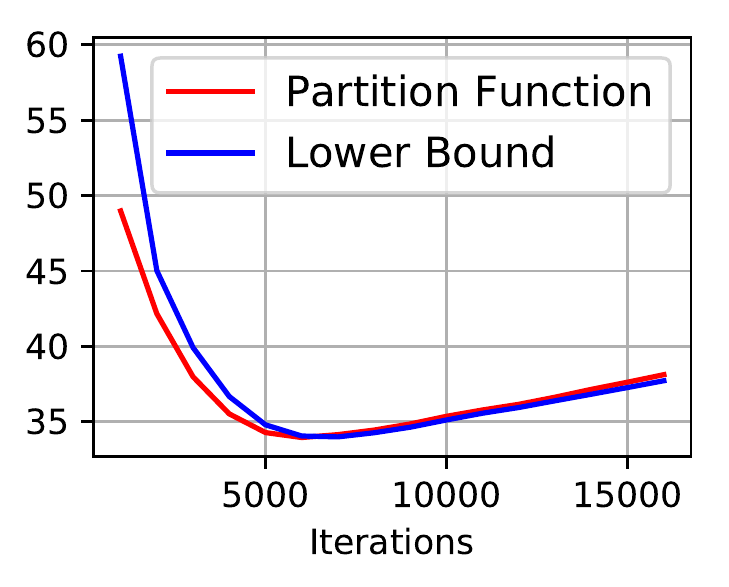}}
\subfigure[RBM loss and NLL]{\includegraphics[width=0.24\columnwidth]{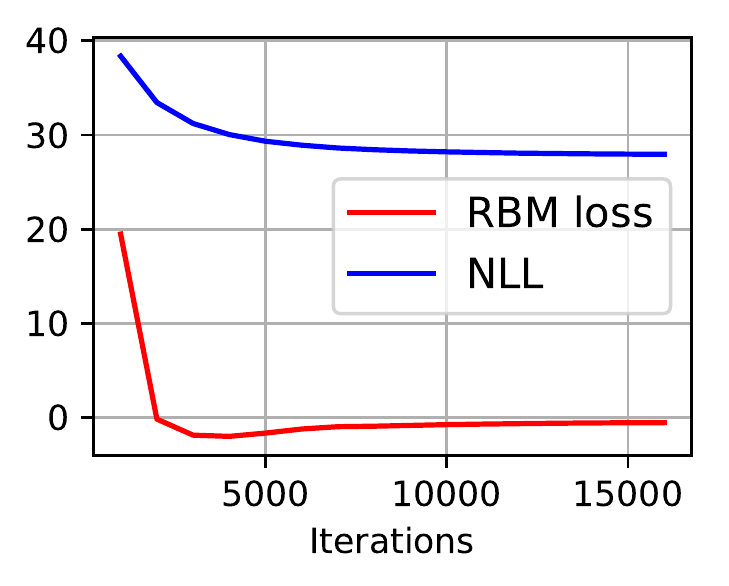}}
\vspace{-.2cm}
\caption{Curves of AdVIL on Digits. (a-c) compare the values of the variational approximations and the corresponding ground truths. All bounds are rather tight after 5,000 iterations. (d) shows that the RBM loss (i.e., the loss of  $\theta$ as in Eqn.~(\ref{eqn:AdVIL_obj})) tends to zero and the model converges gradually. 
}
\vspace{-.3cm}
\label{fig:analysis}
\end{center}
\end{figure}

\vspace{-.1cm}
\subsection{Empirical analysis of AdVIL}\label{sec:analysis}
\vspace{-.1cm}

We present a detailed analysis of AdVIL in RBM, whose  energy function is defined as $
\mathcal{E}(v, h) = - b^\top  v - v^\top  W h - c^\top h.$ The conditional distributions of an RBM are tractable, but we still treat $P(h|v)$ as unknown and train AdVIL in a fully black-box manner. 
The analysis is performed on the Digits dataset and we augment the data of five times by shifting the digits following the protocol in~\citep{kuleshov2017neural}. The dimensions of $v$, $h$ and $z$ are $64$, $15$ and $10$, respectively. Therefore, the log partition function of the RBM and the entropy of the decoder can be computed by brute force.

Firstly, we empirically validate AdVIL in Fig.~\ref{fig:analysis}. 
Specifically, Panel (a) shows that the variational encoder $Q(h|v)$ provides a tight upper bound of the free energy after 2,000 iterations. Panel (b) demonstrates that the variational distribution $r(z|h)$ estimate the entropy of $q(v, h)$ accurately. Panel (c) shows that $q(v, h)$ can successfully track the log partition function after 5,000 iterations. Panel (d) presents that the RBM loss balances well between the negative phase and positive phase, and the model converges gradually. See Appendix~\ref{app:test_lemma} for an empirical test of the condition in Lemma~\ref{thm:pos_grad}.

Secondly, we empirically show that both $P$ and $q$ can generate data samples in Appendix~\ref{app:rbm_samples}.

Lastly, we analyze the sensitivity of $K_1$. Theoretically, enlarging $K_1$ will make $q(v, h)$ and $P(v, h)$ to be close and then help the convergence according to Theorem~\ref{thm:convergence}. As shown in Fig.~\ref{fig:nvil_under_estimate} (a), a larger $K_1$ at least won't hurt the convergence, which agrees with Theorem~\ref{thm:convergence}. Though 
$K_1 = 15$ is sufficient on the Digits dataset, we use $K_1 = 100$ as a default setting for AdVIL on larger datasets.

\begin{figure} 
\vspace{.3cm}
\centering
\subfigure[Sensitivity of $K_1$]{\includegraphics[width=0.24\columnwidth]{./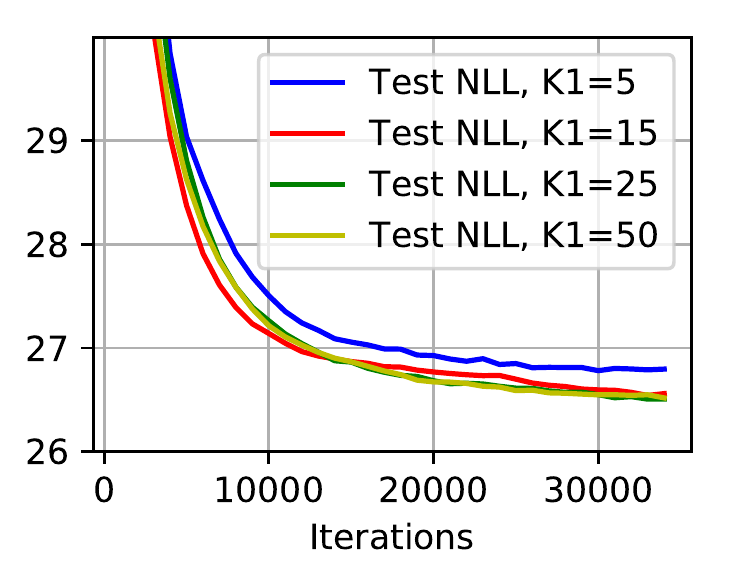}}
\subfigure[NVIL]{\includegraphics[width=0.24\columnwidth]{./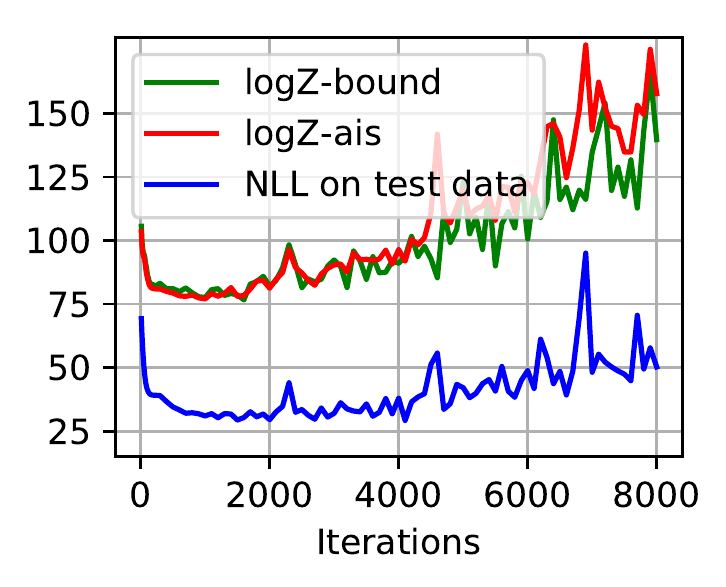}}
\subfigure[AdVIL]{\includegraphics[width=0.24\columnwidth]{./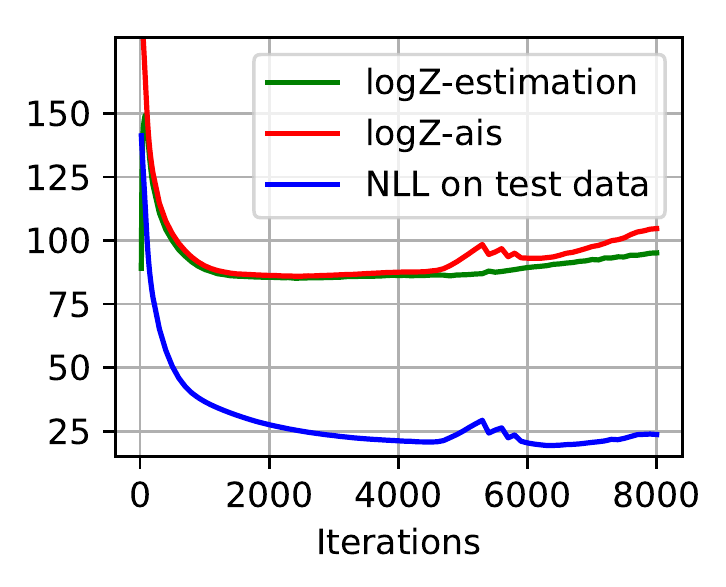}}
\subfigure[PCD-1]{\includegraphics[width=0.24\columnwidth]{./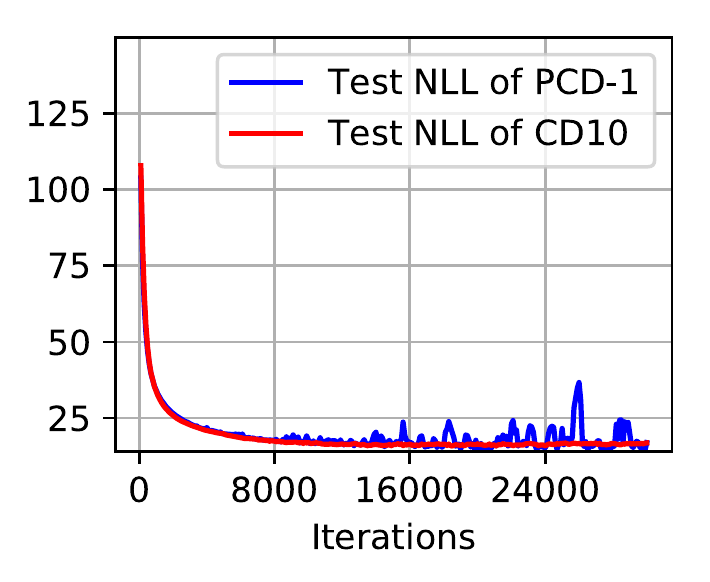}}
\vspace{-.2cm}
\caption{(a) Sensitivity analysis of $K_1$ on the Digits dataset. (b-d) Learning curves of  NVIL,  AdVIL and CD on the Mushrooms dataset. Compared to NVIL, AdVIL provides a tighter and lower variance estimate of $\log \mathcal{Z}$ and achieves better performance.  Compared to PCD-1 and CD-10, AdVIL can track the log partition function and achieve comparable results though trained in a black-box manner. }
\vspace{-.3cm}
\label{fig:nvil_under_estimate}
\end{figure}

\begin{table*}[t]
\vspace{-.1cm}
\setlength{\tabcolsep}{4pt}
  \centering
  	\caption{Anneal importance sampling (AIS) results in RBM. The results are recorded on the test set according to the best validation performance and averaged over three runs. 
  	AdVIL outperforms NVIL consistently and significantly. See the standard deviations in Appendix~\ref{app:std}.}
\label{table:rbm_uci}
\vskip 0.05in
 \resizebox{\textwidth}{8mm}{
  \begin{tabular}{cccccccccc}
      \toprule
    	Method & Digits & Adult & Connect4 & DNA & Mushrooms &  NIPS-0-12 & Ocr-letters &  RCV1  \\
    \midrule
	NVIL-mean & $-$27.36 & $-$20.05 & $-$24.71 & $-$97.71 & $-$29.28 &$-$290.01& $-$47.56 & $-$50.47 \\
	AdVIL-mean &  $\mathbf{-26.34}$ & $\mathbf{-19.29}$ & $\mathbf{-21.95}$ & $\mathbf{-97.59}$ & $\mathbf{-19.59}$ & $\mathbf{-276.42}$ & $\mathbf{-45.64}$ & $\mathbf{-50.22}$\\
	
\bottomrule
  \end{tabular}
  }
  \vspace{-.4cm}
\end{table*}


\vspace{-.1cm}
\subsection{RBM Results}
\label{sec:exp_rbm}
\vspace{-.1cm}

To the best of our knowledge, NVIL~\citep{kuleshov2017neural} is the only existing black-box learning method for MRFs and hence it is the most direct competitor of AdVIL. In this section, we provide a systematic comparison and analysis of these two methods in terms of the log-likelihood results on the UCI databases~\citep{Dua:2017}.

For a fair comparison, we use the widely-adopted anneal importance sampling (AIS)~\citep{salakhutdinov2008quantitative} metric for quantitative evaluation. Besides, we carefully perform grid search over the default settings of NVIL~\citep{kuleshov2017neural} and our settings based on their code, and choose the best configuration including $K_1 = 100$ (See details in Appendix~\ref{app:data}). We directly compare with the best version of NVIL in Tab.~\ref{table:rbm_uci}. It can be seen that AdVIL consistently outperforms NVIL on all datasets, which demonstrate the effectiveness of AdVIL. Besides, the time complexity of AdVIL is comparable to that of NVIL with the same hyperparameters.

We compare the learning curves of NVIL and AdVIL on the Mushroom dataset. 
As shown in Fig.~\ref{fig:nvil_under_estimate} (b), the upper bound of NVIL is underestimated after 4,000 iterations and then the model can get worse or even diverge. 
In contrast, as shown in Fig.~\ref{fig:nvil_under_estimate} (c), the lower bound of AdVIL is consistently valid. Besides, the estimate of NVIL is looser and of higher variance than that of AdVIL. 
The results agree with our analysis in Sec.~\ref{sec:related_work} and explain why AdVIL significantly outperforms NVIL. Further, as shown in Fig.~\ref{fig:nvil_under_estimate} (d), AdVIL is comparable to CD-10 and persistent contrastive divergence (PCD)~\citep{tieleman2008training}, which leverage the tractability of the conditional distributions in an RBM.

\begin{table*}[t]
\vspace{-.1cm}
\setlength{\tabcolsep}{4pt}
	\caption{AIS results in DBM. The results are recorded according to the best validation performance and averaged by three runs. AdVIL achieves higher averaged AIS results on five out of eight datasets and has a better overall performance than VCD. See the standard deviations in Appendix~\ref{app:std}.}
\label{table:dbm_uci}
\vskip 0.05in
  \centering
 \resizebox{\textwidth}{8mm}{
  \begin{tabular}{cccccccccc}
      \toprule
    	Method & Digits & Adult & Connect4 & DNA & Mushrooms &  NIPS-0-12 & Ocr-letters &  RCV1 \\
    \midrule
    VCD-mean   & $-28.49$ & $-22.26$ & $-26.79$ & $\mathbf{-97.59}$ & $-23.15$ & $-356.26$  & $\mathbf{-45.77}$ & $\mathbf{-50.83}$ \\
    AdVIL-mean & $\mathbf{-27.89}$  & $\mathbf{-20.29}$ & $\mathbf{-26.34}$ & $-99.40$ & $\mathbf{-21.21}$ & $\mathbf{-287.15}$  & $-48.38$ & $-51.02$ \\
    
\bottomrule
  \end{tabular}
  }
\end{table*}

\vspace{-.1cm}
\subsection{DBM Results}\label{sec:exp_dbm}
\vspace{-.1cm}

We would like to demonstrate that AdVIL has the ability to deal with highly intractable models such as a DBM conveniently and effectively, compared to standard CD-based methods~\citep{hinton2002training,welling2002new, welling2005learning} and NVIL~\citep{kuleshov2017neural}.

DBM~\citep{salakhutdinov2009deep} is a powerful family of deep models that stack multiple RBMs together. The energy function of a two-layer DBM is defined as $\mathcal{E}(v, h_1, h_2) = - b^\top  v - v^\top  W_1 h_1  - c_1^\top h_1-  h_1^\top  W_2 h_2 -  c_2^\top h_2.$
Learning a DBM is challenging because $P(h_1, h_2| v)$ is not tractable and CD~\citep{hinton2002training} is not applicable. Inspired by~\citep{welling2002new, welling2005learning}, we construct a variational CD (VCD) baseline by employing the same variational encoder $Q(h_1, h_2 | v)$ as in AdVIL. The free energy is approximated by the same upper bound as in Eqn.~(\ref{eqn:AdVIL_step1}), which is minimized with respect to the parameters in $Q(h_1, h_2 | v)$. The gradient of the parameters in the DBM is given by Eqn.~(\ref{eqn:grad_vcd}), where the Gibbs sampling procedure is approximated by $h_1 \sim Q(h_1 | v)$ and $v \sim P(v|h_1)$. Note that AdVIL can be directly applied to this case. As for the time complexity, the training speed of AdVIL is around ten times slower than that of VCD in our implementation. However, the approximate inference and sampling procedure of AdVIL is very efficient thanks to the directed variational distributions.

The log-likelihood results on the UCI databases are shown in Tab.~\ref{table:dbm_uci}. It can be seen that AdVIL  has a better overall performance even trained in a black-box manner, which shows the promise of AdVIL. See Appendix~\ref{app:ana_dbm} for learning curves and a detailed analysis of the results.

We also extend NVIL by using the same $Q(h_1, h_2 | v)$ and $q(v, h_1, h_2)$ as AdVIL. However, NVIL diverges after 300 iterations and gets bad AIS results (e.g., less than $-40$ on Digits) in our implementation. A potential reason is that the upper bound given by $q$ in NVIL can be underestimated if $q$ is high-dimensional, as analyzed in Sec.~\ref{sec:related_work} and Fig.~\ref{fig:nvil_under_estimate}. Note that $q(v, h_1, h_2)$ in DBM involves latent variables and has a higher dimension (e.g. 164 on the Digits dataset) than $q(v)$ in RBM (e.g. 64 on  the Digits dataset). The results again demonstrate the advantages of AdVIL over NVIL.

\vspace{-.1cm}
\subsection{GRBM results}
\vspace{-.1cm}

\begin{figure}[t]
\begin{center}
\vspace{-.2cm}
\subfigure[Data]{\includegraphics[width=0.24\columnwidth]{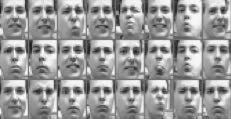}}
\subfigure[Filters of the GRBM]{\includegraphics[width=0.24\columnwidth]{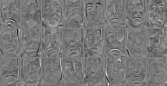}}
\subfigure[Samples from $q$]{\includegraphics[width=0.24\columnwidth]{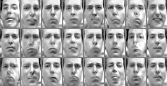}}
\subfigure[Samples from $P$]{\includegraphics[width=0.24\columnwidth]{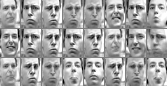}}
\vspace{-.2cm}
\caption{Filters and samples of a GRBM learned by AdVIL on the Frey faces dataset. (a) presents the training data. (b) presents the first 40 filters of the GRBM. (c) and (d) show random samples from the variational decoder and the GRBM, respectively. We present the mean of $v$ for better visualization.}
\label{fig:grbm}
\vspace{-.3cm}
\end{center}
\end{figure}

We now show the ability of AdVIL to learn a GRBM on the continuous Frey faces dataset. 
The energy function of a GRBM is $\mathcal{E}(v, h) = \frac{1}{2\sigma^2} ||v - b||^2 - c^\top  h - \frac{1}{\sigma}v^\top  W h,$ where $\sigma$ is the standard deviation of the Gaussian likelihood and is set as $1$ manually. We standardize the data by subtracting the mean and dividing by the standard deviation. The dimensions of $h$ and $z$ are 200 and 50, respectively.

Though a GRBM is more sensitive to the hyperparameters and hence harder to train than an RBM~\citep{cho2011improved,cho2013gaussian}, AdVIL can successfully capture the underlying data distribution using the default hyperparameters (See Appendix~\ref{app:data}). As shown in Fig.~\ref{fig:grbm}, the samples from both the GRBM (via Gibbs sampling after 100,000 burn-in steps) and the decoder are  meaningful faces. Besides, the filters of the GRBM outline diverse prototypes of faces, which accords with our expectation.

In summary, the results of the three models together demonstrate that AdVIL can learn a broad family of models conveniently and effectively in a fully black-box manner.

\vspace{-.2cm}
\section{Conclusion and discussion}
\vspace{-.2cm}

A novel black-box learning and inference method for undirected graphical models, called adversarial variational inference and learning (AdVIL), is proposed.
The key to AdVIL is a double variational trick that approximates the negative free energy and the log partition function separately. A formal convergence theorem, which provides insights for implementation, is established for AdVIL.
Empirical results show that AdVIL can deal with a broad family of MRFs in a fully black-box manner and outperforms both the standard contrastive divergence method and the black-box NVIL algorithm.

Though AdVIL shows promising results, we emphasize that the black-box learning and inference of the MRFs are far from completely solved, especially on high-dimensional data. The two intractability problems of MRFs are distinct since the posterior of the latent variables is {\it local} in terms of $v$ but the partition function is {\it global} by integrating out $v$. The additional integral makes estimating the partition function much more challenging. In AdVIL, simply increasing the number of updates of the decoder to obtain a tighter estimate of the partition function on high-dimensional data can be expensive.  A potential future work to avoid the problem is adopting recent advances on non-convex optimization~\citep{dauphin2014identifying,reddi2016stochastic,wang2017stochastic} to accelerate the inner loop optimization. We conjecture that AdVIL is comparable to CD in RBM and superior to VCD in DBM on larger datasets if AdVIL can be trained to nearly converge based on our current results. 

\section*{Acknowledgements}

This work was supported by the National Key Research and Development Program of China (No. 2017YFA0700904), NSFC Projects (Nos. 61620106010, U19B2034, U1811461), Beijing NSF Project (No. L172037), Beijing Academy of Artificial Intelligence (BAAI), Tsinghua-Huawei Joint Research Program, a grant from Tsinghua Institute for Guo Qiang, Tiangong Institute for Intelligent Computing, the JP Morgan Faculty Research Program and the NVIDIA NVAIL Program with GPU/DGX Acceleration. C. Li was supported by the Chinese postdoctoral innovative talent support program and Shuimu Tsinghua Scholar.

\bibliographystyle{iclr2020_conference}
\bibliography{example}

\appendix

\section{Derivation of the objective function}
\label{app:obj}

Here we derive the objective function of AdVIL in detail. Let $\theta$, $\phi$ and $\psi$ denote the trainable parameters in the MRF, the variational encoder and the variational decoder, respectively. The first variational trick bounds the free energy as follows:
\begin{align*}
\mathcal{L}(\theta) & = -\mathbb{E}_{P_\mathcal{D}(v)}\left[\log(\int_h e^{-  \mathcal{E}(v, h)} dh)\right] + \log \mathcal{Z} \\
& = -\mathbb{E}_{P_\mathcal{D}(v)}\log[\int_h Q(h|v)\frac{e^{-  \mathcal{E}(v, h)}}{Q(h|v)}dh]  + \log \mathcal{Z} \\
& \le \mathbb{E}_
{P_\mathcal{D}(v)Q(h|v)} \left[\mathcal{E} (v, h) + \log Q(h | v) \right] + \log \mathcal{Z} \coloneqq \mathcal{L}_1(\theta, \phi),
\end{align*}
where the bound is derived via applying the Jensen inequality and the equality holds if and only if $Q(h|v) = P(h|v)$ for all $v$.

The second variational trick bounds the log partition function as follows:
\begin{align*}
\mathcal{L}_1(\theta, \phi) \nonumber & =  \mathbb{E}_{P_\mathcal{D}(v)Q(h|v)} \left[\mathcal{E} (v, h) + \log Q(h | v) \right] + \log (\int_v\int_h e^{-\mathcal{E}(v, h)} dv dh)  \\
& =  \mathbb{E}_{P_\mathcal{D}(v)Q(h|v)} \left[\mathcal{E} (v, h) + \log Q(h | v) \right] + \log (\int_v\int_h q(v, h) \frac{e^{-\mathcal{E}(v, h)}}{q(v, h)} dv dh) \\
& \ge  \mathbb{E}_{P_\mathcal{D}(v)Q(h|v)} \left[\mathcal{E} (v, h) + \log Q(h | v) \right] + \mathbb{E}_{q(v, h)} \left [\log (\frac{ e^{-\mathcal{E}(v, h)}}{q(v, h)}) \right] \\
& =  \mathbb{E}_{P_\mathcal{D}(v)Q(h|v)}\left[\underbrace{\overbrace{\mathcal{E}(v, h)}^{\text{energy term}} +  \overbrace{\log Q(h|v)}^{\text{entropy term}}}_{\textbf{Positive Phase}}\right]
 - \mathbb{E}_{q(v, h)}\left[\underbrace{\overbrace{\mathcal{E}(v, h)}^{\text{energy term}} +  \overbrace{\log q(v, h)}^{\text{entropy term}}}_{\textbf{Negative Phase}}\right] \\
& \coloneqq \mathcal{L}_2(\theta,\phi,\psi),
\end{align*}
where the bound is also derived via applying the Jensen inequality and the equality holds if and only if $q(v, h) = P(v, h)$.

To enhance the expressive power of the variational decoder, we introduce an auxiliary variable $z$ and define $q(v, h) = \int_z q(z) q(h | z) q(v | h) dz$, which makes the entropy term in the negative phase intractable. To address the problem, we propose the third variational approximation. First, we can decompose the entropy of $q(v, h)$ as $- \mathbb{E}_{q(v, h)} \log q(v, h) =  - \mathbb{E}_{q(v, h)}  \log q(v| h) - \mathbb{E}_{q(h)} \log q( h)$ and we only need to approximate  $- \mathbb{E}_{q(h)} \log q( h)$. However, simply applying the standard variational trick as above, we get an upper bound as follows:
\begin{align*}
- \mathbb{E}_{q(h)} \log q( h)  & =   - \mathbb{E}_{q(h)} \log \int_z q( h, z)  dz  \\
& =   - \mathbb{E}_{q(h)} \log \int_z r(z|h) \frac{q( h, z)}{r(z|h)}  dz  \\
& \le  - \mathbb{E}_{  q(h) r(z|h)}  \log \left [\frac{q(h, z)}{r(z|h)} \right ],
\end{align*}
which is not satisfactory because the optimization problem will be $\min_P \min_Q \max_q \min_r$. Instead, we derive a lower bound as follows:


\begin{align*}
- \mathbb{E}_{q(h)}  \log q(h)
& =   - \mathbb{E}_{q(h)}  \log q(h) - \mathbb{E}_{q(h, z)}  \log q(z|h) + \mathbb{E}_{q(h, z)}  \log q(z|h) \\
& =  - \mathbb{E}_{q(h, z)}  \log q(h, z) + \mathbb{E}_{q(h, z)}  \log q(z|h)  \\
& =   - \mathbb{E}_{  q(h, z)}  \log \left [\frac{q(h, z)}{r(z|h)} \right ]   + \mathbb{D}_{KL}( q(z|h) || r(z|h))  \\
& \ge  - \mathbb{E}_{  q(h, z)}  \log \left [\frac{q(h, z)}{r(z|h)} \right ], \\
\end{align*}
where $\mathbb{D}_{KL}(\cdot || \cdot)$ denotes the KL-divergence and the equality holds if and only if $r(z|h) = q(z|h)$ for all $h$. The difference between the two bounds is that the expectation 
is taken over $q(h) r(z|h)$ in the upper bound while over $q(h, z)$ in the lower bound. Using the lower bound, the optimization problem will be $\min_P \min_Q \max_p \max_r$.

\section{Formal training procedure}
\label{app:algo}

\begin{algorithm}[t]
\begin{algorithmic}[1]
\caption{Adversarial variational inference and learning by stochastic gradient descent}\label{algo:AdVIL}
\STATE {\bfseries Input:} Constants $K_1$ and $K_2$, learning rate schemes $\alpha$ and $\gamma$, randomly initialized $\theta$, $\phi$ and $\psi$
\REPEAT
 \FOR{$i = 1, ..., K_1$}
\STATE Sample a batch of $(v, h, z) \sim q(v, h, z)$
\STATE Estimate the objective of $q$ and $r$ according to Eqn.~(\ref{eqn:entropy_bound}) and the negative phase in Eqn.~(\ref{eqn:AdVIL_obj})
\STATE Update $\psi$ to maximize the objective according to $\alpha$
\ENDFOR
 \FOR{$i = 1, ..., K_2$}
\STATE Sample a batch of $(v, h) \sim P_{\mathcal{D}} (v) Q(h|v)$
\STATE Estimate the objective of $Q$ according to the positive phase in Eqn.~(\ref{eqn:AdVIL_obj})
\STATE Update $\phi$ to minimize the objective according to $\gamma$
\ENDFOR
\STATE Sample a batch of $(v, h) \sim P_{\mathcal{D}} (v) Q(h|v)$ and another batch of $(v, h) \sim q(v, h)$
\STATE Estimate the objective of $P$ according to Eqn.~(\ref{eqn:AdVIL_obj})
\STATE Update $\theta$ to minimize the objective according to $\gamma$
\UNTIL{Convergence or reaching certain threshold}
\end{algorithmic}
\end{algorithm}

The formal training procedure of AdVIL is presented in Algorithm~\ref{algo:AdVIL}.

\section{Detailed theoretical analysis}
\label{app:theory}

For simplicity, we consider discrete $v$ and $h$ (e.g., in an RBM) and the analysis can be extended to the continuous cases. We assume  $v \in \{0, 1\}^{d_v}$ and $h \in \{0, 1\}^{d_h}$, where $d_v$ and $d_h$ are the dimensions of the visible and latent variables respectively.

\subsection{Analysis in the nonparametric case}
\label{app:theorey_nop}

We first analyze the nonparametric case in Proposition~\ref{thm:nop} as follows.

\begin{proposition}~\label{thm:nop}
For any $P(v, h) = \exp(- \mathcal{E}(v, h)) / \mathcal{Z}$, $\mathcal{L}_2(\theta, \phi, \psi)$ is a tight estimate of the negative log-likelihood of $P(v) $, under the following assumptions
\begin{enumerate}
    \item $Q(h|v)$ and $q(v, h)$ are nonparametric.
    \item The inner optimization over $Q(h|v)$ and $q(v, h)$ can get their optima.
\end{enumerate}
\end{proposition}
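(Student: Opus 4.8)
The plan is to solve the two inner optimizations of Eqn.~(\ref{eqn:lp}) in sequence and show that each one saturates the corresponding Jensen inequality already recorded in Eqns.~(\ref{eqn:AdVIL_step1}) and~(\ref{eqn:AdVIL_obj}), so that the minimax value collapses onto $\mathcal{L}(\theta)$. Concretely, I would fix $\theta$ and $\phi$, carry out the innermost $\max_\psi$ first, then the $\min_\phi$, invoking in each case the fact that a nonparametric family (Assumption~1) contains the relevant optimizer and that the inner optimization actually attains it (Assumption~2).

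For the maximization over $\psi$ (equivalently over $q(v,h)$), only the negative phase depends on $\psi$, and I would rewrite it as a relative entropy. Writing $e^{-\mathcal{E}(v,h)} = \mathcal{Z}\,P(v,h)$ gives
\[
-\mathbb{E}_{q(v,h)}\bigl[\mathcal{E}(v,h) + \log q(v,h)\bigr] = \mathbb{E}_{q(v,h)}\Bigl[\log\tfrac{\mathcal{Z}\,P(v,h)}{q(v,h)}\Bigr] = \log \mathcal{Z} - \KL\bigl(q \,\|\, P\bigr),
\]
which is maximized precisely at $q = P$, where it equals $\log\mathcal{Z}$. Since $P(v,h)$ lies in the nonparametric family and is attained by Assumptions~1--2, $\max_\psi \mathcal{L}_2(\theta,\phi,\psi) = \mathbb{E}_{P_\mathcal{D}(v)Q(h|v)}[\mathcal{E}(v,h) + \log Q(h|v)] + \log\mathcal{Z} = \mathcal{L}_1(\theta,\phi)$, which is exactly the tightness condition $q = P$ noted after Eqn.~(\ref{eqn:AdVIL_obj}).

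For the outer minimization over $\phi$ (equivalently over $Q(h|v)$), I would run the same argument one conditional at a time. For each fixed $v$, using $\int_h e^{-\mathcal{E}(v,h)}\,dh = e^{-\mathcal{F}(v)}$ I get
\[
\mathbb{E}_{Q(h|v)}\bigl[\mathcal{E}(v,h)+\log Q(h|v)\bigr] = \mathcal{F}(v) + \KL\bigl(Q(\cdot|v)\,\|\,P(\cdot|v)\bigr) \ge \mathcal{F}(v),
\]
with equality at $Q(\cdot|v)=P(\cdot|v)$. Taking $\mathbb{E}_{P_\mathcal{D}(v)}$ and adding $\log\mathcal{Z}$ then yields $\min_\phi \mathcal{L}_1(\theta,\phi) = \mathbb{E}_{P_\mathcal{D}(v)}[\mathcal{F}(v)] + \log\mathcal{Z} = \mathcal{L}(\theta)$, by the rewriting in Eqn.~(\ref{eqn:nll_2}). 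Chaining the two steps gives $\min_\phi\max_\psi \mathcal{L}_2(\theta,\phi,\psi)=\mathcal{L}(\theta)$, which is the asserted tightness.

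I do not expect a serious obstacle, since the two bounding inequalities are already established; the entire content is identifying the maximizer/minimizer and verifying attainability. The one point needing care is that \emph{no} minimax interchange is required: the inner $\max_\psi$ eliminates all $\psi$-dependence and returns $\mathcal{L}_1(\theta,\phi)$, after which the $\min_\phi$ is an ordinary minimization, so I never invoke a saddle-point or Sion-type argument. The role of the nonparametric hypothesis is precisely to guarantee that $P(v,h)$ and each $P(\cdot|v)$ are representable (Assumption~1) and that the alternating optimization attains them rather than merely approaching them (Assumption~2); this mirrors the nonparametric optimality analysis of the discriminator and generator in GAN~\citep{goodfellow2014generative}.
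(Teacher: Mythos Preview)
Your proposal is correct and reaches the same conclusion as the paper---$q^\star=P(v,h)$, $Q^\star(\cdot\mid v)=P(\cdot\mid v)$, and substitution yields $\mathcal{L}_2=\mathcal{L}$---but the route differs. The paper poses each inner optimization explicitly as a constrained convex program over the probability simplex, verifies Slater's condition, writes the Lagrangian, and solves the KKT system to recover the Gibbs form of the optimizer; only then does it substitute back. Your argument bypasses all of this by directly rewriting the negative phase as $\log\mathcal{Z}-\KL(q\|P)$ and the positive phase (per $v$) as $\mathcal{F}(v)+\KL(Q(\cdot\mid v)\|P(\cdot\mid v))$, so the optimizers and the optimal values drop out from nonnegativity of KL. This is shorter and uses exactly the variational identities already implicit in Eqns.~(\ref{eqn:AdVIL_step1})--(\ref{eqn:AdVIL_obj}); the paper's KKT derivation buys a self-contained verification that the Gibbs distribution is the unique constrained optimum without appealing to KL properties, at the cost of more machinery. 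Your remark that no minimax interchange is needed (since $\max_\psi$ returns a $\psi$-free quantity) is a nice observation that the paper does not make explicit.
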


\begin{proof}
Given $P(v, h)$, i.e., $\mathcal{E} (v, h)$, to find $q^*(v, h)$, we optimize $\mathcal{L}_2$ over $\{ q(v, h)| v \in \{0, 1\}^{d_v}, h \in \{0, 1\}^{d_h}\}$ (we will use a shortcut $\{ q(v, h)\}$ for simplicity). The optimization problem is equivalent to: 
\begin{align*} 
\min_{ \{q(v, h)\} }
 & \sum_{v, h} q(v, h) \left[ \mathcal{E}(v, h)  + \log q(v, h) \right]  \\
 \textrm{subject to:} & \sum_{v, h} q(v, h) = 1, \\
  & q(v, h) \ge 0, \forall v, h.
\end{align*}
Note that the objective function is convex since its Hessian matrix is positive semi-definite. Besides, the constraints are linear. Therefore, it is a convex optimization problem. Further, we can verify that the Slater's condition~\citep{boyd2004convex} holds when $q$ is uniform and then the strong duality holds. Then, we can use the KKT conditions to solve the optimization problem.

The Lagrangian $\mathcal{G} (\{q(v, h)\} , \lambda, \{\mu(v, h)\})$ is:
\begin{align*} 
\sum_{v, h} q(v, h) \left[ \mathcal{E}(v, h)  + \log q(v, h) \right] + \lambda (\sum_{v, h} q(v, h) - 1) + \sum_{v, h} \mu(v, h) q(v, h),
\end{align*}
where  $\lambda$ and $\{\mu(v, h)\}$ are the associated Lagrange multipliers.

To satisfy the stationarity, we take gradients with respect to $q(v, h)$ for all $(v, h)$ and get: 
\begin{align*} 
 \left[ \mathcal{E}(v, h)  + \log q^*(v, h)  + 1 \right] + \lambda +  \mu(v, h) = 0,
\end{align*}
which implies 
\begin{align*} 
 q^*(v, h)   =  \exp(- \mathcal{E}(v, h) - (1 + \lambda + \mu(v, h))).
\end{align*}
According to the complementary slackness, we have 
\begin{align*} 
 \mu(v, h) q^*(v, h) = 0, \forall v,h,
\end{align*}
which implies $\mu(v, h) = 0, \forall v, h$, since  $ q^*(v, h) > 0 , \forall v, h$.

To satisfy the primal equality constraint, we have 
\begin{align*} 
\sum_{v, h} q^*(v, h)  =  \sum_{v, h} \exp(- \mathcal{E}(v, h) - (1 + \lambda)) = 1,
\end{align*}
which implies
\begin{align*} 
 q^*(v, h)  =  \frac{ \exp(- \mathcal{E}(v, h))}{\sum_{v', h'} \exp(- \mathcal{E}(v', h'))} = P(v, h), \forall v, h. 
\end{align*}

To find $Q^*(h|v)$, we optimize $\mathcal{L}_2$ over $\{ Q(h|v)| v \in \{0, 1\}^{d_v}, h \in \{0, 1\}^{d_h}\}$ (we will use a shortcut $\{ Q(h|v)\}$ for simplicity). The optimization problem  is equivalent to: 
\begin{align*} 
\min_{ \{Q(h|v)\} }
 & \sum_{v} P_{\mathcal{D}}(v)\sum_{h} Q(h|v) \left[ \mathcal{E}(v, h)  + \log Q(h|v) \right]  \\
 \textrm{subject to:} & \sum_{h} Q(h|v) = 1, \forall v, \\
  & Q(h|v) \ge 0, \forall v, h.
\end{align*}
Similar to the above procedure, we can get 
\begin{align*} 
 Q^*(h|v)  =  \frac{ \exp(- \mathcal{E}(v, h))}{\sum_{ h'} \exp(- \mathcal{E}(v, h'))} = P(h| v), \forall v, h. 
\end{align*}

Under the assumptions that (1) $Q(h|v)$ and $q(v, h)$ are nonparametric, and (2) the inner optimization over $\psi$ and $\phi$ can get the optimum, the optimal variational distributions $P(v, h)$ and  $P(h | v)$ can be obtained. Plugging them back into $\mathcal{L}_2$, we get
\begin{align*} 
\mathcal{L}_2 & = \mathbb{E}_{P_\mathcal{D}(v)P(h|v)} \left[ \mathcal{E}(v, h)  + \log P( h | v) \right] - \mathbb{E}_{P(v, h)} \left[ \mathcal{E}(v, h)  + \log P(v, h) \right] \\
 & = \mathbb{E}_{P_\mathcal{D}(v)P(h|v)} \left[ - \log \sum_h e^{-\mathcal{E} (v, h)} \right] + \mathbb{E}_{P(v, h)} \left[ \log \mathcal{Z} \right]\\
 & = \mathbb{E}_{P_\mathcal{D}(v)} \left[ \mathcal{F} (v)\right] + \log \mathcal{Z}  = \mathcal{L}.
\end{align*}
\end{proof}

\textbf{Remark~} Similar to Theorem 1 in~\citep{goodfellow2014generative}, Proposition~\ref{thm:nop} is under the nonparametric assumption, which is relaxed in our following analysis. Namely, we will consider more practical cases where $q(v, h)$ may not be exactly the same as $P(v, h)$ during training. 

\subsection{Main convergence theorem}
\label{app:theorey_convergence}

For convenience, we summarize the training dynamics of Algorithm~\ref{algo:AdVIL} with $K_1 = 1, K_2 = 1$ and the exact gradients (not the stochastic ones), as follows: 
\begin{align} 
\psi_{k+1} = \psi_{k} + \alpha_k \frac{\partial \mathcal{L}_2(\theta_{k}, \phi_{k}, \psi_{k})}{\partial \psi}, \nonumber
\\
\phi_{k+1} = \phi_{k} - \gamma_k \frac{\partial \mathcal{L}_2(\theta_{k}, \phi_{k}, \psi_{k+1})}{\partial \phi}, \nonumber \\ 
\theta_{k+1} = \theta_{k} - \gamma_k \frac{\partial \mathcal{L}_2(\theta_{k}, \phi_{k}, \psi_{k+1})}{\partial \theta}, \label{eqn:dynamics}
\end{align}
where $k = 1, 2, ...~$. We will prove that even though we are optimizing $\mathcal{L}_2(\theta, \phi, \psi)$, $ (\theta_k, \phi_k)$ converges to a stationary point of $\mathcal{L}_1(\theta, \phi)$  under certain conditions. To establish this, we first prove that the angle between $\frac{\partial \mathcal{L}_2(\theta, \phi, \psi)}{\partial \theta}$ and $\frac{\partial \mathcal{L}_1(\theta, \phi)}{\partial \theta}$ are sufficiently positive if $q(v, h)$ and $P(v, h)$ satisfy certain conditions, as summarized in Lemma~\ref{thm:pos_grad}.

\begin{lemma}\label{thm:pos_grad}
For any $(\theta, \phi)$, there exists a symmetric positive definite matrix $H$ such that $\frac{\partial \mathcal{L}_2(\theta, \phi, \psi)}{\partial \theta}  = H \frac{\partial \mathcal{L}_1(\theta, \phi)}{\partial \theta}$  under the  assumption: $||\sum_{v, h}\delta(v, h) \frac{\partial \mathcal{E}(v, h) }
{\partial \theta}||_2  < || \frac{\partial \mathcal{L}_1(\theta, \phi)}{\partial \theta} ||_2$ if  $ || \frac{\partial \mathcal{L}_1(\theta, \phi)}{\partial \theta} ||_2 > 0 $ and $||\sum_{v, h}\delta(v, h) \frac{\partial \mathcal{E}(v, h) }
{\partial \theta}||_2 = 0$ if 
$|| \frac{\partial \mathcal{L}_1(\theta, \phi)}{\partial \theta} ||_2 = 0$,
where $\delta(v, h) = q(v, h) - P(v, h)$.
\end{lemma}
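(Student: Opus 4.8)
The plan is to reduce the statement to an elementary linear-algebra fact: whenever two vectors have a strictly positive inner product, one can be carried to the other by a symmetric positive definite matrix.

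First I would compute the two gradients explicitly. Differentiating $\log \mathcal{Z} = \log \int_{v,h} e^{-\mathcal{E}(v,h)}$ gives $\frac{\partial \log \mathcal{Z}}{\partial \theta} = -\mathbb{E}_{P(v,h)}[\frac{\partial \mathcal{E}}{\partial \theta}]$, and since neither $Q(h|v)$ nor the data distribution depends on $\theta$,
\[
\frac{\partial \mathcal{L}_1}{\partial \theta} = \mathbb{E}_{P_\mathcal{D}(v)Q(h|v)}\Big[\tfrac{\partial \mathcal{E}}{\partial \theta}\Big] - \mathbb{E}_{P(v,h)}\Big[\tfrac{\partial \mathcal{E}}{\partial \theta}\Big].
\]
Because the entropy term $\log q(v,h)$ in $\mathcal{L}_2$ carries no $\theta$-dependence, Eqn.~(\ref{eqn:AdVIL_grad_e}) gives $\frac{\partial \mathcal{L}_2}{\partial \theta} = \mathbb{E}_{P_\mathcal{D}(v)Q(h|v)}[\frac{\partial \mathcal{E}}{\partial \theta}] - \mathbb{E}_{q(v,h)}[\frac{\partial \mathcal{E}}{\partial \theta}]$. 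Subtracting, the positive phases cancel and I obtain the clean identity $\frac{\partial \mathcal{L}_2}{\partial \theta} = \frac{\partial \mathcal{L}_1}{\partial \theta} - d$, where $d := \sum_{v,h}\delta(v,h)\frac{\partial \mathcal{E}(v,h)}{\partial \theta}$ is exactly the quantity the hypothesis controls.

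Writing $g_1 = \frac{\partial \mathcal{L}_1}{\partial \theta}$ and $g_2 = \frac{\partial \mathcal{L}_2}{\partial \theta} = g_1 - d$, I would dispatch the degenerate case first: if $\|g_1\|_2 = 0$ the hypothesis forces $d = 0$, so $g_2 = 0$ and $H = I$ works. When $\|g_1\|_2 > 0$ the hypothesis gives $\|d\|_2 < \|g_1\|_2$, and Cauchy--Schwarz yields $g_1^\top g_2 = \|g_1\|_2^2 - g_1^\top d \ge \|g_1\|_2^2 - \|g_1\|_2\|d\|_2 > 0$, so in particular $g_2 \neq 0$. The problem thus reduces to the linear-algebra claim: given $g_1,g_2$ with $g_1^\top g_2 > 0$, there is a symmetric positive definite $H$ with $H g_1 = g_2$.

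The main work is exhibiting such an $H$. I would take
\[
H = \frac{g_2 g_2^\top}{g_2^\top g_1} + \Big(I - \frac{g_1 g_1^\top}{\|g_1\|_2^2}\Big),
\]
which is manifestly symmetric and satisfies $H g_1 = g_2$ by direct substitution, since the first term returns $g_2$ and the projector annihilates $g_1$. For positive definiteness I would decompose an arbitrary $x = a g_1 + w$ with $w \perp g_1$ and compute $x^\top H x = \frac{(g_2^\top x)^2}{g_2^\top g_1} + \|w\|_2^2$; both terms are nonnegative because $g_2^\top g_1 > 0$, and they cannot vanish simultaneously for $x \neq 0$ (if $w = 0$ then $x = a g_1$ with $a \neq 0$ forces the first term to be $a^2 g_2^\top g_1 > 0$), which gives strict positivity. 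The only real obstacle is this construction step: the gradient bookkeeping is routine, but one must produce a single $H$ that is at once symmetric, positive definite, and maps $g_1$ to $g_2$, and the positive-inner-product condition extracted from the hypothesis is precisely what guarantees such an $H$ exists.
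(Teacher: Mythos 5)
Your proof is correct, and its skeleton coincides with the paper's: both establish the identity $g_2 = g_1 - d$, where $g_1 = \frac{\partial \mathcal{L}_1(\theta,\phi)}{\partial\theta}$, $g_2 = \frac{\partial \mathcal{L}_2(\theta,\phi,\psi)}{\partial\theta}$ and $d = \sum_{v,h}\delta(v,h)\frac{\partial \mathcal{E}(v,h)}{\partial\theta}$, handle the degenerate case $\|g_1\|_2 = 0$ by noting the hypothesis forces $d = 0$ and hence $g_2 = 0$, and use Cauchy--Schwarz together with $\|d\|_2 < \|g_1\|_2$ to conclude $\langle g_1, g_2\rangle > 0$. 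The one place you go further is the final step: the paper simply declares that $\langle g_1, g_2\rangle > 0$ is ``equivalent'' to the existence of a symmetric positive definite $H$ with $Hg_1 = g_2$, leaving that linear-algebra fact unproved, whereas you exhibit such an $H$ explicitly, namely $H = \frac{g_2 g_2^\top}{g_2^\top g_1} + \bigl(I - \frac{g_1 g_1^\top}{\|g_1\|_2^2}\bigr)$, and verify symmetry, the mapping property, and strict positive definiteness via the orthogonal decomposition $x = a g_1 + w$ with $w \perp g_1$. Your version is therefore self-contained where the paper's is not: the paper's phrasing buys brevity at the cost of an asserted (though true) claim, while your rank-one-plus-projector construction supplies exactly the witness needed to justify it, and it also establishes the nontrivial direction of the equivalence the paper invokes.
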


\begin{proof}
According to the Cauchy-Schwarz inequality, we have
\begin{align*} 
\langle \frac{\partial \mathcal{L}_1(\theta, \phi)}{\partial \theta} , \sum_{v, h} \delta(v, h) \frac{\partial \mathcal{E}(v, h)}{\partial \theta} \rangle \le || \frac{\partial \mathcal{L}_1(\theta, \phi)}{\partial \theta} ||_2 || \sum_{v, h}\delta(v, h) \frac{\partial \mathcal{E}(v, h) }{\partial \theta}||_2 .
\end{align*}

If $|| \frac{\partial \mathcal{L}_1(\theta, \phi)}{\partial \theta} ||_2 > 0$, according to the assumption $||\sum_{v, h}\delta(v, h) \frac{\partial \mathcal{E}(v, h) }{\partial \theta}||_2  < || \frac{\partial \mathcal{L}_1(\theta, \phi)}{\partial \theta} ||_2$, we have
\begin{align*} 
\langle \frac{\partial \mathcal{L}_1(\theta, \phi)}{\partial \theta} , \sum_{v, h} \delta(v, h) \frac{\partial \mathcal{E}(v, h) }{\partial \theta} \rangle < || \frac{\partial \mathcal{L}_1(\theta, \phi)}{\partial \theta} ||_2^2 = \langle \frac{\partial \mathcal{L}_1(\theta, \phi)}{\partial \theta} , \frac{\partial \mathcal{L}_1(\theta, \phi)}{\partial \theta}\rangle,
\end{align*}
which implies that
\begin{align*} 
\langle \frac{\partial \mathcal{L}_1(\theta, \phi)}{\partial \theta} , \frac{\partial \mathcal{L}_1(\theta, \phi)}{\partial \theta} - \sum_{v, h} \delta(v, h) \frac{\partial \mathcal{E}(v, h) }{\partial \theta} \rangle >0.
\end{align*}
According to the definitions of $\mathcal{L}_1(\theta, \phi)$ and $\mathcal{L}_2(\theta, \phi, \psi)$, 
we have
\begin{align*} 
& \frac{\partial \mathcal{L}_1(\theta, \phi)}{\partial \theta} - \sum_{v, h} \delta(v, h) \frac{\partial \mathcal{E}(v, h) }{\partial \theta} \\
= & \sum_{v, h} [P_\mathcal{D}(v)Q(h|v) - P(v, h)]  \frac{\partial \mathcal{E}(v, h) }{\partial \theta} - \sum_{v, h} [q(v, h) - P(v, h)]  \frac{\partial \mathcal{E}(v, h) }{\partial \theta} \\
= & \sum_{v, h} [P_\mathcal{D}(v)Q(h|v) - q(v, h)]  \frac{\partial \mathcal{E}(v, h) }{\partial \theta} = \frac{\partial \mathcal{L}_2(\theta, \phi, \psi)}{\partial \theta},
\end{align*}
which implies that 
\begin{align*} 
\langle \frac{\partial \mathcal{L}_1(\theta, \phi)}{\partial \theta} , \frac{\partial \mathcal{L}_2(\theta, \phi, \psi)}{\partial \theta} \rangle >0.
\end{align*}
Equivalently, there exists a symmetric positive definite matrix $H$ such that $\frac{\partial \mathcal{L}_2(\theta, \phi, \psi)}{\partial \theta}  = H \frac{\partial \mathcal{L}_1(\theta, \phi)}{\partial \theta}.$
Note that this also holds when $|| \frac{\partial \mathcal{L}_1(\theta, \phi)}{\partial \theta} ||_2 = 0$ (i.e., $ \frac{\partial \mathcal{L}_1(\theta, \phi)}{\partial \theta} = \vec{0}$) because  $|| \frac{\partial \mathcal{L}_2(\theta, \phi, \psi)}{\partial \theta} ||_2 \le || \frac{\partial \mathcal{L}_1(\theta, \phi)}{\partial \theta} ||_2 + ||\sum_{v, h}\delta(v, h) \frac{\partial \mathcal{E}(v, h) }{\partial \theta}||_2 = 0 $ (i.e., $ \frac{\partial \mathcal{L}_2(\theta, \phi, \psi)}{\partial \theta} = \Vec{0}$), according to the assumption.
\end{proof}

\textbf{Remark~} 
Lemma~\ref{thm:pos_grad} assumes that $q(v, h)$ and $P(v, h)$ are sufficiently close, which is encouraged by choosing a sufficiently powerful family of $q(v, h)$ and updating $\psi$ multiple times per update of $\theta$, i.e. $K_1 > 1$. If Lemma~\ref{thm:pos_grad} holds, optimizing $\mathcal{L}_2(\theta, \phi, \psi)$ with respect to $\theta$ will decrease $\mathcal{L}_1(\theta, \phi)$ in expectation with a sufficiently small stepsize. Also note that for any $(\theta, \psi)$, $\frac{\partial \mathcal{L}_2(\theta, \phi, \psi)}{\partial \phi} = \frac{\partial \mathcal{L}_1(\theta, \phi)}{\partial \phi}$ and therefore, optimizing $\mathcal{L}_2(\theta, \phi, \psi)$ with respect to $\phi$ will decrease $\mathcal{L}_1(\theta, \phi)$ in expectation with a sufficiently small stepsize.

Based on Lemma~\ref{thm:pos_grad} and other commonly used assumptions in the analysis of stochastic gradient descent~\citep{bottou2018optimization}, Algorithm~\ref{algo:AdVIL} converges to a stationary point of $\mathcal{L}_1(\theta, \phi)$, as stated in Theorem~\ref{thm:convergence}.

\begin{theorem}\label{thm:convergence}
Solving the optimization problem in Eqn.~(\ref{eqn:lp}) using stochastic gradient descent according to Algorithm~\ref{algo:AdVIL}, then
\begin{align*} 
\lim_{k \rightarrow \infty} \ep [||\frac{\partial \mathcal{L}_1(\theta_k, \phi_k)}{\partial\theta}||_2^2] = 0, 
\end{align*}
under the following assumptions.
\begin{enumerate}
    \item The condition of Corollary 4.12 in~\citep{bottou2018optimization}: $\mathcal{L}_2(\theta, \phi, \psi)$ is twice differentiable with respect to $\theta$, $\phi$ and $\psi$.
    \item Assumption 4.1 in~\citep{bottou2018optimization}: the gradients of $\mathcal{L}_2(\theta, \phi, \psi)$ with respect to $\theta$, $\phi$ and $\psi$ are Lipschitz.
    \item Assumption 4.3 in~\citep{bottou2018optimization}: the first and second moments of the stochastic gradients are bounded by the expected gradients. 
    \item The stepsize satisfies the diminishing condition~\citep{bottou2018optimization}, i.e., 
    $\alpha_k = \gamma_k$, $\sum_{k=1}^\infty \gamma_k = \infty$, $\sum_{k=1}^\infty \gamma_k^2 < \infty$.
    \item The condition of Lemma~\ref{thm:pos_grad} holds in each step $k$. Therefore, \begin{align*}
\forall k, \exists H_k, \frac{\partial \mathcal{L}_2(\theta_k, \phi_k, \psi_{k+1})}{\partial \theta} = H_k \frac{\partial \mathcal{L}_1(\theta_k, \phi_k)}{\partial \theta}.
\end{align*}
\end{enumerate}
\end{theorem}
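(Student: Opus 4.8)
The plan is to recognize Algorithm~\ref{algo:AdVIL}, summarized by the dynamics in Eqn.~(\ref{eqn:dynamics}), as a \emph{preconditioned} stochastic gradient descent on $\mathcal{L}_1(\theta, \phi)$: I would view $w = (\theta, \phi)$ as the single optimization variable being minimized, and treat the inner $\psi$-loop merely as the mechanism that produces the iterate $\psi_{k+1}$ entering the outer $\theta$- and $\phi$-updates. The whole argument then reduces to verifying the hypotheses of the nonconvex SGD convergence result (Corollary 4.12 of~\citep{bottou2018optimization}) for the objective $\mathcal{L}_1$ with a gradient-related search direction, and reading off the stated conclusion from its $\theta$-block.

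First I would record the two structural identities linking $\mathcal{L}_2$ and $\mathcal{L}_1$. For the encoder block, the remark following Lemma~\ref{thm:pos_grad} gives $\frac{\partial \mathcal{L}_2}{\partial \phi} = \frac{\partial \mathcal{L}_1}{\partial \phi}$ exactly, since $\mathcal{L}_1$ and $\mathcal{L}_2$ differ only through the negative phase, which does not depend on $\phi$. For the model block, Assumption 5 supplies a symmetric positive definite $H_k$ with $\frac{\partial \mathcal{L}_2(\theta_k, \phi_k, \psi_{k+1})}{\partial \theta} = H_k \frac{\partial \mathcal{L}_1(\theta_k, \phi_k)}{\partial \theta}$. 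Stacking these, the expected search direction $g_k = (\frac{\partial \mathcal{L}_2}{\partial \theta}, \frac{\partial \mathcal{L}_2}{\partial \phi})$ equals $\tilde H_k \, \nabla \mathcal{L}_1(w_k)$ for the block-diagonal preconditioner $\tilde H_k = \mathrm{diag}(H_k, I)$, which is again symmetric positive definite.

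Next I would verify the first- and second-moment conditions of Assumption 4.3 of~\citep{bottou2018optimization} for $F = \mathcal{L}_1$. Positive-definiteness of $\tilde H_k$ yields the descent condition $\langle \nabla \mathcal{L}_1(w_k), \mathbb{E}[g_k] \rangle = \nabla\mathcal{L}_1(w_k)^\top \tilde H_k \nabla\mathcal{L}_1(w_k) \ge \mu \, ||\nabla \mathcal{L}_1(w_k)||_2^2$ with $\mu = \min\{\lambda_{\min}(H_k), 1\}$, while the norm bound $||\frac{\partial \mathcal{L}_2}{\partial \theta}||_2 \le ||\frac{\partial \mathcal{L}_1}{\partial \theta}||_2 + ||\sum_{v,h}\delta(v,h) \frac{\partial \mathcal{E}(v,h)}{\partial \theta}||_2 < 2\,||\frac{\partial \mathcal{L}_1}{\partial \theta}||_2$ (invoking the hypothesis of Lemma~\ref{thm:pos_grad}), together with the exact $\phi$-identity, gives the upper bound $||\mathbb{E}[g_k]||_2 \le \mu_G ||\nabla \mathcal{L}_1(w_k)||_2$ with $\mu_G = 2$. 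The variance bound on the stochastic estimate of $g_k$ is exactly Assumption 3, and the $L$-smoothness needed for the descent lemma follows from Assumptions 1 and 2, noting that $\mathcal{L}_1$ inherits differentiability and Lipschitz gradients from $\mathcal{E}$, $Q$ and the log partition function. With the diminishing-stepsize condition (Assumption 4) in place, Corollary 4.12 of~\citep{bottou2018optimization} applied to the joint variable $w=(\theta,\phi)$ yields $\lim_{k\to\infty}\mathbb{E}[||\nabla\mathcal{L}_1(w_k)||_2^2]=0$; in particular its $\theta$-block gives the claimed $\lim_{k\to\infty}\mathbb{E}[||\frac{\partial \mathcal{L}_1(\theta_k, \phi_k)}{\partial \theta}||_2^2] = 0$.

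The main obstacle is making the constants $\mu$ and $\mu_G$ \emph{uniform} in $k$. Assumption 5 only guarantees that each individual $H_k$ is positive definite, so a priori $\lambda_{\min}(H_k)$ could degenerate to $0$ along the trajectory, whereas Corollary 4.12 requires a single $\mu > 0$ valid for all $k$. I would address this by reading the per-step hypothesis of Lemma~\ref{thm:pos_grad} as a \emph{gap} condition, requiring $||\sum_{v,h}\delta(v,h)\frac{\partial \mathcal{E}(v,h)}{\partial \theta}||_2 \le (1-c)\,||\frac{\partial \mathcal{L}_1(\theta,\phi)}{\partial \theta}||_2$ for a fixed $c \in (0,1]$; this keeps the alignment angle bounded away from a right angle and hence forces $\lambda_{\min}(H_k)$ away from $0$ and $\lambda_{\max}(H_k)$ bounded above, supplying uniform $\mu$ and $\mu_G$. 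A secondary subtlety is that $\psi_{k+1}$ arises from only $K_1$ inner ascent steps rather than exact maximization, but this is already absorbed into Assumption 5, which is imposed on the gradient evaluated at the actually-used $\psi_{k+1}$, so no separate inner-loop convergence analysis is required.
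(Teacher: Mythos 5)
Your proposal is correct and follows essentially the same route as the paper: the paper's entire proof of Theorem~\ref{thm:convergence} is the one-line citation ``See Corollary 4.12 in \citep{bottou2018optimization}'', i.e., exactly the reduction you carry out, with Assumption 5 (via Lemma~\ref{thm:pos_grad}) supplying the gradient-alignment part of Bottou's Assumption 4.3 for the $\theta$-block and the exact identity $\frac{\partial \mathcal{L}_2}{\partial \phi} = \frac{\partial \mathcal{L}_1}{\partial \phi}$ handling the $\phi$-block. Your explicit verification of the moment conditions --- in particular your observation that per-step positive definiteness of $H_k$ alone does not yield the single uniform $\mu > 0$ that Corollary 4.12 requires, and must be read as a uniform gap condition $\|\sum_{v,h}\delta(v,h)\frac{\partial \mathcal{E}(v,h)}{\partial \theta}\|_2 \le (1-c)\,\|\frac{\partial \mathcal{L}_1}{\partial \theta}\|_2$ --- spells out (and slightly repairs) what the paper leaves implicit.
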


\begin{proof}
See Corollary 4.12 in~\citep{bottou2018optimization}.
\end{proof}

\textbf{Remark~} Assumption 1 and Assumption 2 in Theorem~\ref{thm:convergence} are ensured because we use the sigmoid and tanh activation functions. Assumption 3 and Assumption 4 in Theorem~\ref{thm:convergence} are ensured by the sampling and learning rate schemes of the stochastic gradient descent. Assumption 5 in Theorem~\ref{thm:convergence} is weaker than the nonparametric assumption of Proposition~\ref{thm:nop} but still requires a large $K_1$. Also note that the statement of converging to $\mathcal{L}_1$ in Theorem~\ref{thm:convergence} is weaker than that in Proposition~\ref{thm:nop}. 

\subsection{Complementary convergence theorem}
\label{app:theorey_complementary}

\citet{heusel2017gans} propose a two-time scale update rule to train minimax optimization problems with a convergence guarantee even using $K_1 = 1$. AdVIL converges if using the same training method as in~\citep{heusel2017gans}, which is summarized in Proposition~\ref{thm:another_conv}.
\begin{proposition}\label{thm:another_conv}
AdVIL trained with a two-time scale update rule~\citep{heusel2017gans} converges to a stationary local Nash equilibrium almost surely under the following assumptions.
\begin{enumerate}
\item The gradients with respect to $\theta$, $\phi$ and $\psi$ are Lipschitz.
\item $\sum_k \alpha_k = \infty$, $\sum_k \alpha_k^2 < \infty$, $\sum_k \gamma_k = \infty$, $\sum_k \gamma_k^2 < \infty$,$\gamma_k = o(\alpha_k)$.
\item The stochastic gradient errors are bounded in expectation.
\item For each $\theta$, the ordinary differentiable equation corresponding to Equation~\ref{eqn:dynamics} has a local asymptotically stable attractor within a domain of attraction such that the attractor is Lipschitz. Similar assumptions are required for $\phi$ and $\psi$.
\item $\sup_k ||\theta_k|| < \infty$, $\sup_k ||\psi_k|| < \infty$, $\sup_k ||\phi_k|| < \infty$.
\end{enumerate}

\end{proposition}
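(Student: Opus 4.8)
The plan is to recognize Proposition~\ref{thm:another_conv} as a direct instance of the two-time-scale stochastic approximation (TTUR) convergence theorem of~\citet{heusel2017gans}, which itself rests on Borkar's two-time-scale framework. Concretely, I would recast the alternating updates of AdVIL (the dynamics in Eqn.~(\ref{eqn:dynamics}), now run with $K_1 = 1$ but with \emph{distinct} stepsizes) as a coupled pair of stochastic recursions, and then verify one by one that the five listed assumptions supply exactly the hypotheses that theorem requires. Once the hypotheses are in place, the conclusion — almost sure convergence to a stationary local Nash equilibrium — follows by citing the theorem.

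First I would set up the two time scales. The decoder parameters $\psi$ (into which $r(z|h)$ has been absorbed) are the maximizing player and are placed on the \emph{fast} scale with stepsize $\alpha_k$, while the model and encoder parameters are placed on the \emph{slow} scale with stepsize $\gamma_k$. Since Eqn.~(\ref{eqn:dynamics}) updates both $\theta$ and $\phi$ with the same rate $\gamma_k$ and both are minimizing players, I would group them into a single slow block $\omega := (\theta, \phi)$, reducing the three-parameter problem to the standard two-player TTUR setting of one fast block against one slow block. Writing each stochastic update as its expected gradient plus a zero-mean error, $\widehat{g}_k = g_k + M_{k+1}$, the unbiasedness of the reparameterization and Gumbel-Softmax gradient estimators makes $\{M_{k+1}\}$ a martingale difference sequence with respect to the natural filtration.

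Next I would discharge the technical hypotheses. Assumption~1 (Lipschitz gradients) gives the Lipschitz continuity of the mean update fields that Borkar's theorem needs; Assumption~2 furnishes the Robbins-Monro stepsize conditions on each scale together with the separation $\gamma_k = o(\alpha_k)$ that enforces genuine time-scale splitting; Assumption~3 bounds the conditional second moments of $\{M_{k+1}\}$, controlling the martingale noise; and Assumption~5 ($\sup_k\|\theta_k\|, \sup_k\|\phi_k\|, \sup_k\|\psi_k\| < \infty$) supplies the stability of the iterates that TTUR assumes rather than derives. The key structural input is Assumption~4: for each frozen slow block $\omega$, the ODE $\dot\psi = \partial \mathcal{L}_2 / \partial \psi$ has a locally asymptotically stable attractor $\psi^*(\omega)$ depending Lipschitz-continuously on $\omega$. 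This licenses the singular-perturbation argument in which the fast iterate tracks $\psi^*(\omega)$ while the slow block follows the averaged ODE $\dot\omega = -\partial_\omega \mathcal{L}_2(\omega, \psi^*(\omega))$, whose own attractor (again Assumption~4) is the local Nash equilibrium.

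I expect Assumption~4 to be the crux of the matter: identifying a stable attractor of the fast subsystem and establishing its Lipschitz dependence on the slow variables is precisely the step that cannot be verified \emph{a priori} for a nonconvex adversarial game. Here it is postulated, so the remaining work is bookkeeping — assembling Assumptions~1--5 into the exact form required by the main theorem of~\citet{heusel2017gans} and its supporting lemmas, and concluding that $(\theta_k, \phi_k, \psi_k)$ converges almost surely to a point at which the slow gradient vanishes and $\psi$ sits at the fast attractor, i.e. a stationary local Nash equilibrium. The proof is therefore a verification-and-invocation argument rather than a self-contained derivation.
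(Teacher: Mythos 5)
Your proposal is correct and matches the paper's approach exactly: the paper's entire proof is the single line ``See Theorem 1 in~\citep{heusel2017gans}'', i.e.\ precisely the verification-and-invocation argument you describe, with $\psi$ on the fast scale and $(\theta,\phi)$ grouped on the slow scale. Your write-up simply makes explicit the bookkeeping (martingale noise decomposition, mapping of Assumptions 1--5 to the TTUR hypotheses) that the paper leaves implicit in the citation.
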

\begin{proof}
See Theorem 1 in~\citep{heusel2017gans}. 
\end{proof}

\textbf{Remark~} Compared to Theorem~\ref{thm:convergence}, Proposition~\ref{thm:another_conv} ensures the convergence of AdVIL without assuming $q(v, h)$ is sufficiently close to $P(v, h)$ in each step. However, a two time-sclae update rule~\citep{heusel2017gans} is required to satisfy Assumption 2 and extra weight decay terms are needed to satisfy Assumption 4. Further, the convergence point is not necessarily a stationary point of $\mathcal{L}_1(\theta, \phi)$.

\section{Datasets and experimental settings}
\label{app:data}

We evaluate our method
on the Digits dataset\footnote{ https://scikit-learn.org/stable/modules/generated/
sklearn.datasets.load$\_$digits.html$\#$sklearn.datasets.load$\_$digits}, the UCI binary databases and the Frey faces datasets\footnote{ http://www.cs.nyu.edu/$\sim$roweis/data.html}. The information of the datasets is summarized in Tab.~\ref{table:datasets}.
We implement our model using the TensorFlow~\citep{abadi2016tensorflow} library.
In all experiments, $q$ and $r$ are updated 100 times per update of $P$ and $Q$, i.e. $K_1 = 100$  and $K_2 = 1$. We use the 
ADAM~\citep{kingma2014adam} optimizer with the learning rate $\alpha  = 0.0003$, the moving average ratios $\beta_1 = 0.5$ and $\beta_2 = 0.999$, and the batch size of $500$. We use a continuous $z$ and the sigmoid activation function .
All these hyperparameters are set according to the validation performance of an RBM on the Digits dataset and fixed throughout the paper unless otherwise stated.  
The sizes of the variational distributions depend on the structure of the MRF and are chosen according to the validation performance. The model structures in RBM and DBM experiments are summarized in Tab.~\ref{table:rbm_arc} and Tab.~\ref{table:dbm_arc}, respectively. 

In a two-layer DBM with variables $v$, $h_1$ and $h_2$, we use an encoder $Q(h_1, h_2|v) = Q(h_1|v) Q(h_2|h_1)$ for both AdVIL and VCD. The decoder for AdVIL is the inverse of the encoder with one extra layer on the top, namely $q(v, h_1, h_2) = \int q(v|h_1) q(h_1|h_2) q(h_2 |z) q(z) dz$. In our implementation, both AdVIL and VCD exploits that $v$ and $h_2$ are conditionally independent given $h_1$. The layer-wise structure potentially benefits the training of both methods. Nevertheless, in principle, any differentiable variational distributions be used in AdVIL and a systematical study is left for future work.

The authors of NVIL~\citep{kuleshov2017neural} propose two variants. The first one employs a mixture of Bernoulli as $q$. The second one involves auxiliary variables and employs a neural network as $q$. Both variants scale up to an RBM of at most $64$ visible units as reported in their paper~\citep{kuleshov2017neural}. For a fair comparison, we carefully perform grid search over the default settings of NVIL and our settings based on their code and choose the best configuration. In this setting, the first variant of NVIL still fails to scale up to larger datasets and the best version of the second variant shares the same key hyperparameters as AdVIL, including $K_1 = 100$ and a batch size of 500.

\begin{table}[t]
\setlength{\tabcolsep}{4pt}
  \centering
  	\caption{Dimensions of the visible variables and sizes of the train, validation and test splits.}
\label{table:datasets}
\vskip 0.15in
  \begin{tabular}{lrrrr}
    \toprule
    	Datasets & $\#$ visible & Train & Valid. & Test \\
    \midrule
	Digits & 64 &  1438 &  359 & - \\
	Adult & 123 & 5,000 & 1414 & 26147\\
    Connect4 & 126 & 16,000 & 4000 & 47557 \\
    DNA & 180 & 1400 & 600 & 1186\\
    Mushrooms & 112 & 2,000 &  500 & 5624\\
    NIPS-0-12 & 500 & 400 & 100 & 1240\\
    OCR-letters & 128 & 32,152 & 10,000 & 10,000\\
    RCV1 & 150 & 40,000 & 10,000 & 150,000\\
    Frey faces & 560 & 1965 & - &  - \\
\bottomrule
  \end{tabular}
\end{table}

\begin{table*}[t]
\setlength{\tabcolsep}{4pt}
  \centering
  	\caption{The model structures in RBM experiments.}
\label{table:rbm_arc}
\vskip 0.15in
  \begin{tabular}{cccccccccc}
      \toprule
    	  & Digits& Adult & Connect4 & DNA & Mushrooms &  NIPS-0-12 & Ocr-letters &  RCV1  \\
    \midrule
  dimension of $z$ & $15$ & $15$ & $15$ & $15$ & $15$ & $50$ & $15$ & $15$  \\
   dimension of $h$ & $50$ & $50$ & $50$ & $50$ & $50$ & $200$ & $50$ & $50$ \\
    dimension of $v$ & $64$ & $123$ & $126$ & $180$ & $112$ & $500$ & $128$ & $150$ \\
\bottomrule
  \end{tabular}
\end{table*}

\begin{table*}[t]
\setlength{\tabcolsep}{4pt}
  \centering
  	\caption{The model structures in DBM experiments.}
\label{table:dbm_arc}
\vskip 0.15in
  \begin{tabular}{cccccccccc}
      \toprule
    	  & Digits & Adult & Connect4 & DNA & Mushrooms &  NIPS-0-12 & Ocr-letters &  RCV1  \\
    \midrule
     dimension of $z$ & $15$ & $15$ & $15$ & $15$ & $15$ & $50$ & $15$ & $15$  \\
    dimension of $h_1$ & $50$ & $50$ & $50$ & $50$ & $50$ & $200$ & $50$ & $50$  \\
    dimension of $h_2$ & $50$ & $50$ & $50$ & $50$ & $50$ & $200$ & $50$ & $50$  \\
    dimension of $v$ & $64$ & $123$ & $126$ & $180$ & $112$ & $500$ & $128$ & $150$  \\
\bottomrule
  \end{tabular}
\end{table*}

\section{More results}

\begin{figure}[t]
\begin{center}
\subfigure[]{\includegraphics[width=0.48\columnwidth]{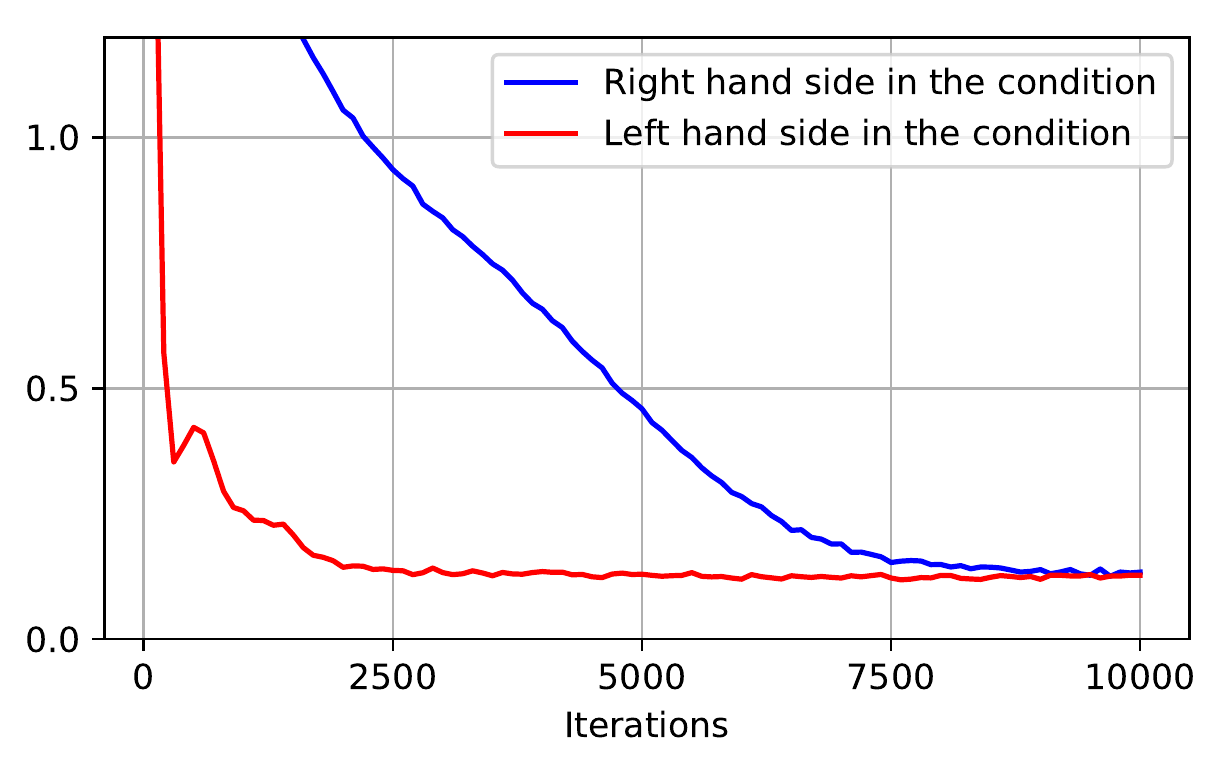}}
\subfigure[]{\includegraphics[width=0.48\columnwidth]{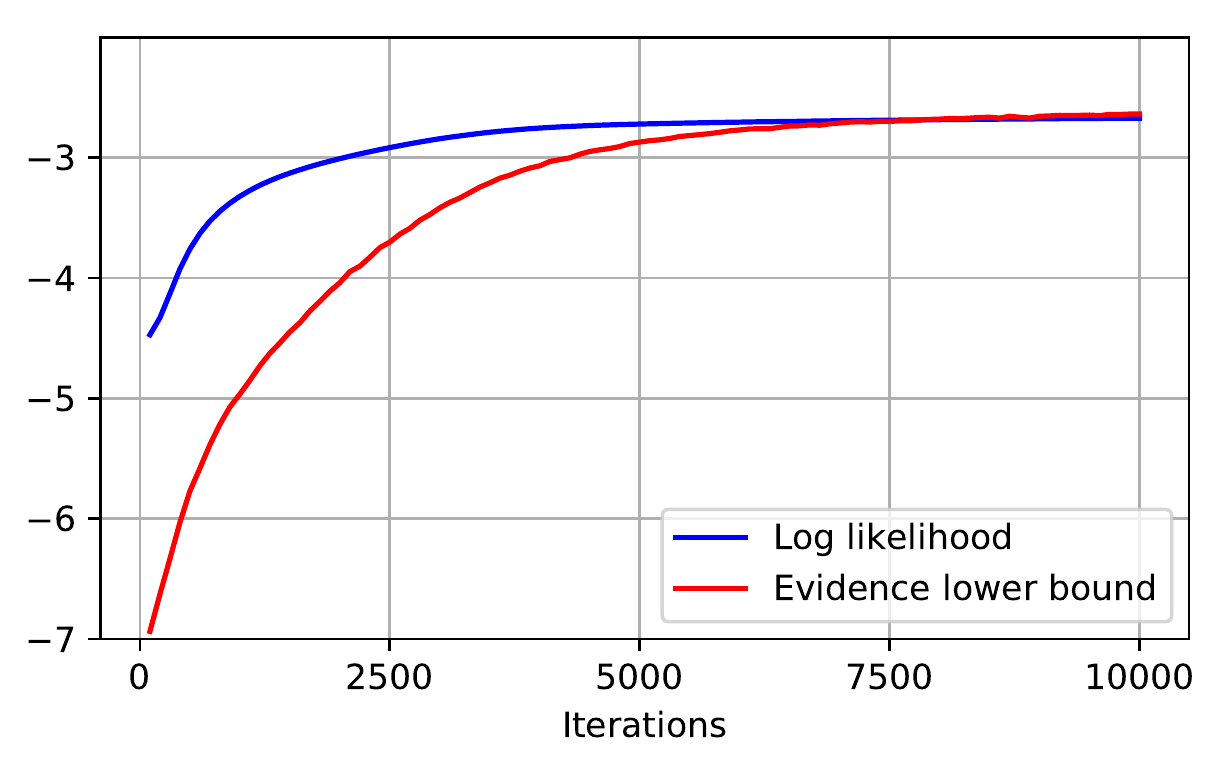}}
\end{center}
\caption{(a) shows that $||\sum_{v, h}\delta(v, h) \frac{\partial \mathcal{E}(v, h) }
{\partial \theta_k}||_2 $ (in red) is less than $|| \frac{\partial \mathcal{L}_1(\theta_k, \phi_k)}{\partial \theta_k} ||_2$ (in blue) during training. (b) shows that both the evidence lower bound (ELBO) $-\mathcal{L}_1$ (in red) and the log likelihood $-\mathcal{L}$ (in blue) converge gradually. The ELBO may be slightly over estimated because we approximate the first term in Eqn.~(\ref{eqn:AdVIL_step1}) by a Monte Carlo estimate.}
\label{fig:lemma}
\end{figure}

\subsection{Empirical verification of Theorem~\ref{thm:convergence}}
\label{app:test_lemma}

We empirically test the assumption
$||\sum_{v, h}\delta(v, h) \frac{\partial \mathcal{E}(v, h) }
{\partial \theta_k}||_2  < || \frac{\partial \mathcal{L}_1(\theta_k, \phi_k)}{\partial \theta_k} ||_2$ for $k< 10000$, where $\delta(v, h) = q(v, h) - P(v, h)$. Note that computing $ || \frac{\partial \mathcal{L}_1(\theta_k, \phi_k)}{\partial \theta_k} ||_2$ exactly requires summing over $v$ and $h$ and therefore we train a small RBM ,where the dimensions of $v$, $h$ and $z$ are $4$ on a synthetic dataset. The data distribution is a categorical distribution over $\{0, 1\}^4$ and it is sampled from a Dirichlet distribution with all concentration parameters to be one. We get 10,000, 1,000 and 1,000 i.i.d samples from the categorical distribution for training, validation and test respectively. 
We find $K_1 = 10$ is sufficient in this case.

The results are shown in Fig.~\ref{fig:lemma}. It can be seen that a decoder with neural networks and auxiliary variables are sufficiently powerful to track the model distribution and therefore Assumption 5 in Theorem~\ref{thm:convergence} holds  in 10,000 steps. Besides, the model converges gradually, which agrees with our convergence analysis. The gap between the red curve in Fig.~\ref{fig:lemma} (a) and the horizontal axis can be further reduced by using a more powerful $q(v, h)$ and advanced optimization techniques for $q(v, h)$.

Theoretically, how Lemma~\ref{thm:pos_grad} holds as the number of variables increases is not clear. Intuitively, we agree that it may get harder to satisfy this condition in a high-dimensional space. However, it is still possible with advanced optimization methods because the MRF is randomly initialized and learned gradually, and the variational decoder is trained to track the MRF (based on an old version of the decoder) after every update of the MRF. Empirically, computing both sides of the condition requires the value of the partition function, which is as hard as training the MRF. Therefore, verifying this condition during training in a high-dimensional case is highly nontrivial.

\subsection{Samples in RBM}
\label{app:rbm_samples}

\begin{figure}[t]
\begin{center}
\subfigure[$P$ normal]{\includegraphics[width=0.16\columnwidth,  trim={0 .96cm .96cm 0}, clip]{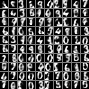}}
\subfigure[$P$ w.o. $\mathcal{H}(Q)$]{\includegraphics[width=0.16\columnwidth,  trim={0 .96cm .96cm 0}, clip]{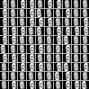}}
\subfigure[$P$ w.o. $\mathcal{H}(q)$]{\includegraphics[width=0.16\columnwidth,  trim={0 .96cm .96cm 0}, clip]{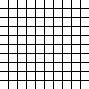}}
\subfigure[$q$ normal]{\includegraphics[width=0.16\columnwidth,  trim={0 .96cm .96cm 0}, clip]{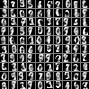}}
\subfigure[$q$ w.o. $\mathcal{H}(Q)$]{\includegraphics[width=0.16\columnwidth,  trim={0 .96cm .96cm 0}, clip]{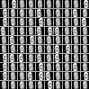}}
\subfigure[$q$ w.o. $\mathcal{H}(q)$]{\includegraphics[width=0.16\columnwidth,  trim={0 .96cm .96cm 0}, clip]{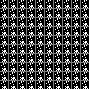}}
\vspace{-.2cm}
\caption{(a-c) Samples from the RBM in different settings. (d-f) Samples from the decoder in different settings.  We present the mean of $v$ for better visualization in all settings.}
\vspace{-.3cm}
\label{fig:entropy}
\end{center}
\end{figure}

We present the samples from the RBM $P$ and the decoder $q$ in Fig.~\ref{fig:entropy}. In this case, we set the number of the hidden units to $50$ and other settings remain the same as in Sec.~\ref{sec:analysis}.
The first column demonstrates that the decoder is a good approximate sampler for the RBM. Note that the samples from the decoder are obtained from efficient ancestral sampling but those from the RBM is obtained by Gibbs sampling after 100,000 burn-in steps.
The second column shows that if $\mathcal{H}(Q)$ is removed, both models collapse to a certain mode of the data. The third column shows that if $\mathcal{H}(q)$ is removed, both models fail to generate meaningful digits. These results demonstrate the importance of the entropy terms and the necessity of approximating $\mathcal{H}(q)$ in a principled way.

\subsection{AdVIL with an autoregressive prior}
\label{app:nade}

Here we present the results of AdVIL with a neural autoregressive distribution estimator (NADE) ~\citep{larochelle2011} as the prior on the Digits dataset. We use the same RBM as in Sec.~\ref{sec:analysis}. The dimension of the latent units in NADE is 15, which is the same as the dimension of the auxiliary variables in the hierarchical decoder presented in Sec.~\ref{sec:spe_v}.

Compared to the hierarchical decoder, the NADE decoder has a tractable entropy and hence does not require $r(z|h)$. 
However, getting samples from NADE is slow while AdVIL requires samples during training. Therefore, $K_1 = 5$ for the NADE decoder has a similar training cost as the hierarchical decoder.

\begin{figure}[t]
\begin{center}
\subfigure[AdVIL with a NADE decoder]{\includegraphics[width=0.4\columnwidth]{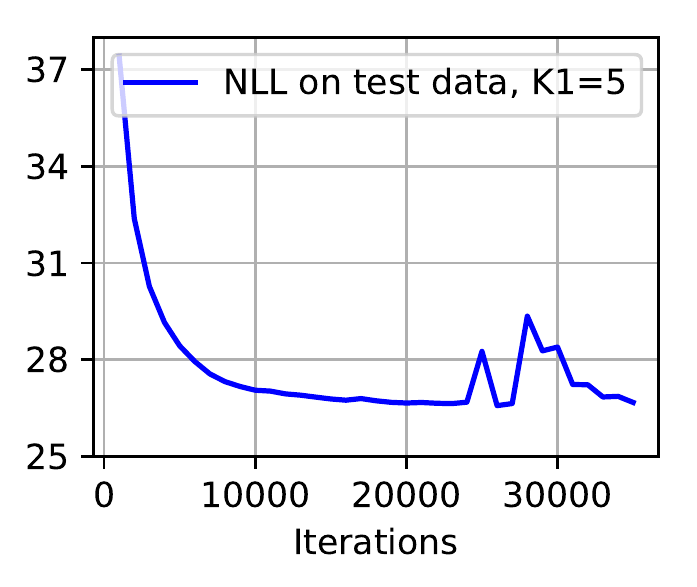}}
\subfigure[AdVIL with a hierarchial decoder]{\includegraphics[width=0.4\columnwidth]{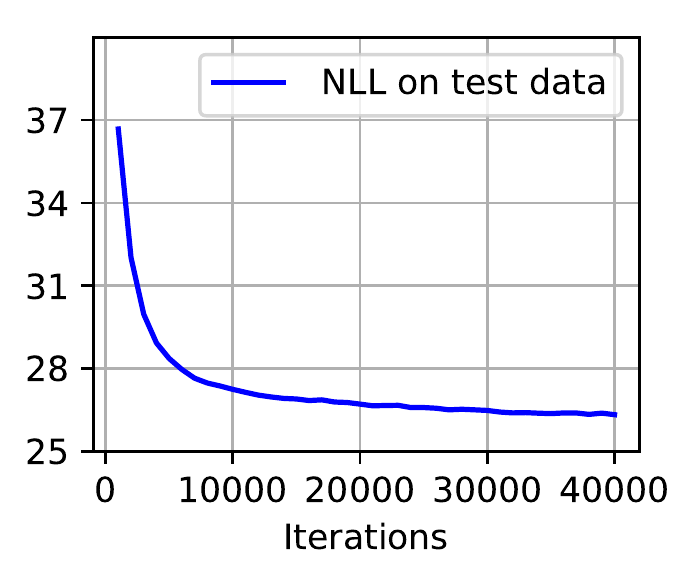}}
\caption{AdVIL with two types of decoders. We set $K_1 = 5$ in the NADE decoder and $K_1 = 100$ in the hierarchical decoder. The two models have a similar model capacity and training time.}
\label{fig:nade}
\end{center}
\end{figure}

Fig.~\ref{fig:nade} compares the two decoders. AdVIL with the NADE decoder achieves a slightly worse and unstable result.

\begin{figure}[t]
\begin{center}
\subfigure[AdVIL]{\includegraphics[width=0.4\columnwidth]{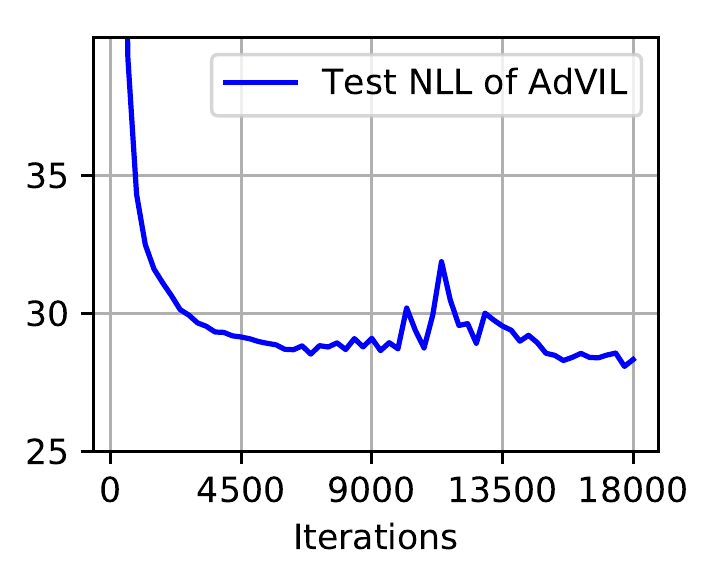}}
\subfigure[VCD]{\includegraphics[width=0.4\columnwidth]{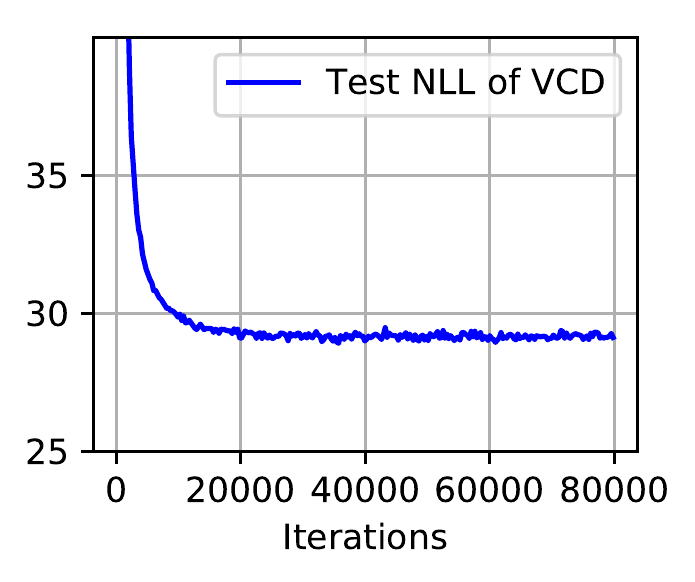}}
\caption{DBM results of AdVIL and VCD on the Digits dataset. The curve of AdVIL is less stable due to the presence of the minimax optimization problem but AdVIL  achieves a better performance.}
\label{fig:dbm_curve_digit}
\end{center}
\end{figure}

\subsection{Learning curves and analysis in DBM}
\label{app:ana_dbm}

We plot the learning curves of AdVIL and VCD in DBM, as shown in Fig.~\ref{fig:dbm_curve_digit}. AdVIL achieves a better result than VCD, which agrees with the quantitative results in Tab.~\ref{table:dbm_uci}. Note that we report the test NLL results in Tab.~\ref{table:dbm_uci} according to the best validation performance.

There are two types of biases introduced by using $Q(h_1, h_2|v) = Q(h_2|h_1) Q(h_1|v)$  in VCD. The first type of bias is introduced by using the approximate free energy in both the positive phase and the negative phase (See Eqn.~(\ref{eqn:grad_cd}) and Eqn.~(\ref{eqn:grad_vcd})). The second bias is introduced by the usage of $Q(h_1|v)$ to approximate $P(h_1|v)$ in the Gibbs sampling procedure to approximate the negative phase in Eqn.~(\ref{eqn:grad_vcd}). The influence of the two types of biases on the negative phase is not clear, which can potentially explain the relatively inferior performance of VCD in DBM. In contrast, AdVIL approximates the negative phase by introducing one bias (i.e., the approximation error between $q(v, h)$ and $P(v, h)$), whose effect on learning is theoretically characterized by Theorem 1. The above results and analysis essentially demonstrate the advantages of AdVIL over CD-based methods in DBM and the importance of developing black-box inference and learning algorithms for general MRFs.

\subsection{AIS results with standard deviations}
\label{app:std}

The AIS results in RBM and RBM with the means and standard deviations are shown in Tab.~\ref{table:rbm_more} and Tab.~\ref{table:dbm_more} respectively.

\begin{table*}
\setlength{\tabcolsep}{4pt}
  \centering
  	\caption{The AIS results of NVIL and AdVIL in RBM with the means and standard deviations. The results are averaged over three runs with different random seeds.}
\label{table:rbm_more}
\vskip 0.15in
\resizebox{\textwidth}{11.5mm}{
  \begin{tabular}{cccccccccc}
      \toprule
    	 Method & Digits & Adult & Connect4 & DNA & Mushrooms &  NIPS-0-12 & Ocr-letters &  RCV1 \\
    \midrule
NVIL-mean & $-$27.36 & $-$20.05 & $-$24.71 & $-$97.71 & $-$29.28 &$-$290.01& $-$47.56  & $-$50.47\\
NVIL-std & 0.13 & 0.27 & 0.61 & 0.12 & 0.31 &2.68 & 0.14 & 0.09\\	
AdVIL-mean& $\mathbf{-26.34}$ & $\mathbf{-19.29}$ & $\mathbf{-21.95}$ & $\mathbf{-97.59}$ & $\mathbf{-19.59}$ & $\mathbf{-276.42}$ & $\mathbf{-45.64}$ & $\mathbf{-50.22}$\\
AdVIL-std & 0.02 & 0.07 & 1.04 & 0.10 &2.01 & 0.21 & 0.34 &0.06 \\
    
\bottomrule
  \end{tabular}
 }
\end{table*}

\begin{table*}
\setlength{\tabcolsep}{4pt}
  \centering
  	\caption{The AIS results of VCD-1 and AdVIL in DBM with the means and standard deviations. The results are averaged over three runs with different random seeds.}
\label{table:dbm_more}
\vskip 0.15in
\resizebox{\textwidth}{11.5mm}{
  \begin{tabular}{cccccccccc}
      \toprule
    	 Method & Digits & Adult & Connect4 & DNA & Mushrooms &  NIPS-0-12 & Ocr-letters &  RCV1 \\
    \midrule
VCD-mean  & $-28.49$ & $-22.26$ & $-26.79$ & $\mathbf{-97.59}$ & $-23.15$ & $-356.26$  & $\mathbf{-45.77}$ & $\mathbf{-50.83}$ \\
VCD-std & 0.47 & 0.51 & 1.42 & 0.03 & 1.42 & 34.70 & 1.15 & 0.62\\
AdVIL-mean & $\mathbf{-27.89}$  & $\mathbf{-20.29}$ & $\mathbf{-26.34}$ & $-99.40$ & $\mathbf{-21.21}$ & $\mathbf{-287.15}$  & $-48.38$ & $-51.02$ \\
AdVIL-std &  0.44 & 0.24 & 1.50 & 0.71 & 0.40 & 0.63 & 1.32 &0.42 \\
    
\bottomrule
  \end{tabular}
 }
\end{table*}

\end{document}